\definecolor{dullmagenta}{rgb}{0.4,0,0.4}   
\definecolor{darkblue}{rgb}{0,0,0.4}
\newcolumntype{S}{>{\centering\arraybackslash} m{.16\linewidth} }
\newtheorem{thm}{Theorem}[section]
\newtheorem{prop}[thm]{Proposition}
\newcommand{\ie}{{\it i.e.}}
\newcommand{\eg}{{\it e.g.}}
\newcommand{\E}{{\mathbb E}}
\newcommand{\pr}{\mathrm{pr}}
\begin{document}
\title{Optimal Data Collection For Informative \\  Rankings Expose Well-Connected Graphs}

\author{\name Braxton Osting \email osting@math.utah.edu\\%
\addr Department of Mathematics \\ 
University of Utah \\
Salt Lake City, UT 84112, USA 
\AND
\name Christoph Brune	 \email c.brune@utwente.nl \\%
\addr 
Department of Applied Mathematics\\ 
University of Twente \\ 
7500 AE Enschede, The Netherlands
\AND
\name Stanley J.  Osher \email sjo@math.ucla.edu\\%
\addr Department of Mathematics \\ University of California \\
Los Angeles, CA 90095, USA}%

\editor{}

\maketitle

\begin{abstract}%
Given a graph where vertices represent alternatives and arcs represent pairwise comparison data, the statistical ranking problem is to find a potential function, defined on the vertices, such that the gradient of the potential function agrees with the pairwise comparisons. Our goal in this paper is to develop a method for collecting data for which the least squares estimator for the ranking problem has maximal Fisher information. Our approach, based on experimental design, is to view data collection as a bi-level optimization problem where the inner problem is the ranking problem and the outer problem is to identify data which maximizes the informativeness of the ranking. Under certain assumptions, the data collection problem decouples, reducing to a problem of finding multigraphs with large algebraic connectivity. This reduction of the data collection problem to graph-theoretic questions is one of the primary contributions of this work. As an application, we study the Yahoo! Movie user rating dataset and demonstrate that the addition of a small number of well-chosen pairwise comparisons can significantly increase the Fisher informativeness of the ranking. As another application, we study the 2011-12 NCAA football schedule and propose schedules with the same number of games which are significantly more informative. Using spectral clustering methods to identify highly-connected communities within the division, we argue that the NCAA could improve its notoriously poor rankings by simply scheduling more out-of-conference games.
\end{abstract}

\bigskip

\begin{keywords}
Ranking, active learning, scheduling, optimal experimental design, graph synthesis,  algebraic connectivity 
\end{keywords}

\section{Introduction} \label{sec:intro}
The problem of statistical ranking\footnote{We use the term ranking to indicate a numerical score for each item in a collection, which is also sometimes referred to as a rating. } arises in a variety of applications, where a collection of alternatives  is  ranked based on pairwise comparisons. 
Methods for ranking must address a number of inherent difficulties including incomplete,  inconsistent, and imbalanced data. 
Despite and possibly as a consequence of these difficulties, although ranking from pairwise comparison data is an old problem \citep{David:1963ys}, there have been several recent contributions to the subject  with applications in social networking, game theory, e-commerce, and logistics \citep{Langville:2012,Osting:2012fk,Hirani:2011fk,Jiang2010,Callaghan:2007}. 

The  statistical ranking problem can be generally posed as finding an estimate for a  ranking, $\phi$, for a set of alternatives from a dataset which consists of (i) a set of alternative pairs which have been queried, $w$, and (ii) noisy, cardinal\footnote{A \emph{cardinal} pairwise comparison dataset refers to quantitative  (real-valued) comparisons between items, as opposed to an \emph{ordinal} pairwise comparison dataset, where only pairwise preferences are specified. } pairwise comparisons for those alternative pairs, $y$. 
We symbolically express an estimator for the ranking problem,
\begin{equation}
\label{eq:R}
\hat \phi_{w} = \mathcal{R}(y,w),
\end{equation}
where the dependence of the  ranking, $\hat \phi_{w}$, on the queried pairs (data collected), $w$, is emphasized by the subscript. 

Consider the dependence of a ranking, $\hat \phi_w$, satisfying \eqref{eq:R}, on the collected data,  $w$. Generally speaking, for a fixed number of alternatives, the more alternative pairs which have been queried, the more informative we expect the ranking, $\hat \phi_{w}$. That is, there is a tradeoff between the amount of pairwise data collected and the informativeness  of the ranking. 
In this paper, we consider the following question:
 Given a  pairwise comparison dataset, $(w_{0},y_{0})$, and the opportunity to collect $\xi$ additional pairwise comparisons, which pairs  should be targeted to maximally improve the informativeness of a statistical ranking, $\hat \phi_w$, satisfying  \eqref{eq:R}? 
 
 We propose a learning algorithm for ranking from cardinal pairwise comparisons. 
To accomplish this, we follow the methodology of the optimal design community \citep{Haber:2008,Pukelsheim:2006,Melas:2006,Fedorov:1972}, and consider the \emph{Fisher information} for the ranking estimate, $\hat \phi_w$, denoted $\text{F.I.}(\hat{\phi}_{w})$.  We are thus led to the following bilevel optimization problem:
\begin{subequations}
\label{eq:bilevelOpt}
\begin{align}
\label{eq:bilevelOpta}
\max_{w} \ \ & f\left( \text{F.I.}(\hat{\phi}_{w}) \right) \\
\label{eq:bilevelOptb}
\text{ such that} \ \ & \hat \phi_{w} = \mathcal{R}(y,w) \\
\label{eq:bilevelOptc}
& w \in \mathbb Z^{N}_{+}, \ \ w \succeq w_{0},  \ \  \| w-w_{0} \|_{1} \leq \xi.
\end{align}
\end{subequations}
where $N:=\binom{n}{2}=\frac{n(n-1)}{2}$ and $f\colon \mathbb S_{+}^{n} \rightarrow \mathbb R$ is a convex function. For general optimal design problems, common choices for the scalar function $f(A)$  include
\begin{subequations}
\label{eq:otherOptCond}
\begin{align}
\label{eq:otherOptConda}
f(A) &= \min_{i} \ \lambda_{i}(A) &&\text{E-optimal} \\
\label{eq:otherOptCondb}
f(A) &= -\text{tr} A^{-1}  = - \sum_{i} \lambda_{i}(A^{-1}) &&\text{A-optimal} \\
\label{eq:otherOptCondc}
f(A) &= \det A = \prod_{i} \lambda_{i}(A)  &&\text{D-optimal}
\end{align}
\end{subequations}
where $\{\lambda_{i}(A)\}_{i=1}^{n}$ denote the eigenvalues of $A$. The constraint in \eqref{eq:bilevelOptc} specifies that  only a limited amount of additional data is collected. 

The ranking problem can be represented on a complete directed graph, $G=(V,A)$, with vertices representing the alternatives and the pairwise comparison data, $y$, is a function on the arcs. 
The queried pairs, $w$, can be viewed as an integer valued function on the arcs representing the number of times a pairwise comparison has been queried for that particular pair. In \S  \ref{sec:OSD}, we show that for the least squares ranking estimate, 
$\hat \phi_w  =  \arg \min_{\langle \phi, 1\rangle=0} \| B \phi - y \|_{2,w} $, where $B$ is defined in \S \ref{sec:AlgConn}, 
the constraint \eqref{eq:bilevelOptb} in the optimization problem \eqref{eq:bilevelOpt} decouples, yielding a graph synthesis problem of finding the graph whose $w$-weighted graph Laplacian has desired spectral properties. For example, an E-optimal design \eqref{eq:otherOptConda} corresponds to finding edge weights $w$ for which the $w$-weighted graph Laplacian has maximal second eigenvalue (algebraic connectivity). This reduction of the data collection problem to graph-theoretic questions is one of the primary contributions of this and previous work \citep{OBO2012}. 

 For the active learning problem for  ranking from \emph{ordinal} pairwise data, there has been a large amount of recent work,  which we briefly discuss in \S \ref{sec:relwork}. However,  the analogous cardinal problem considered here has received  less attention. Several recent papers have proposed using iid random sampling (corresponding to an Erd\"os-R\'enyi graph) for quality assessment algorithms and crowdsourcing experiments, see, \eg,   \citet{eichhorn2010} and \citet{xu2012}. 
 These algorithms collect  pairwise comparisons from a large number of distributed sources without considering the informativeness of the resulting rankings.  Like random sampling, the data collection  methodology advocated here does not depend on the previous pairwise preferences to select new pairwise queries; our proposed learning algorithm is  parallelizable. 
   
In \S  \ref{sec:numexp}, we consider several applications of the methodology developed in \S \ref{sec:OSD} for the optimal data collection problem \eqref{eq:bilevelOpt}. We begin with a few constructed examples and show that  graphs can be generated which have larger algebraic connectivity than Erd\"os-R\'enyi randomly generated graphs. The rankings of the datasets represented by these well-connected graphs are more informative then those represented by Erd\"os-R\'enyi graphs. We then consider the data collection problem for ranking Yahoo! movies and for the 2011-2012 NCAA Division 1 football season. 

\paragraph{Application: improving the informativeness of Yahoo! movie rankings} 
The Yahoo!  Movie user rating dataset consists of an incomplete user-movie matrix where entries represent a score given to the movie by the user. By considering the differences in movie reviews by each user, a pairwise comparison dataset $(w_0,y_0)$ can be constructed. For this dataset, we empirically demonstrate that the assumptions made in \S  \ref{sec:OSD} are reasonable.   By applying the methodology developed in \S  \ref{sec:OSD}, we show that the addition of a small number of well-chosen pairwise comparisons can significantly increase the Fisher informativeness of the ranking. 
The same number of randomly chosen additional pairs has no appreciable impact on the Fisher information.

\paragraph{Application: sports scheduling}  
The statistical ranking problem arises in competitive sports. Here, teams (alternatives) are ranked based on the schedule (queried alternative pairs) and the game results (pairwise comparisons). The dataset is incomplete if not all teams play all other teams;   inconsistent if there are teams A, B, and C, such that  team A beats team B, team B beats team C, and team C beats team A; and imbalanced if the ``strength of schedule'' varies among the teams.  
In this setting, the tradeoff between the amount of data collected (number of games) and the informativeness of the ranking is especially transparent. 
In a single elimination tournament with $n$ teams, there are only $n-1$ games played. Here, we expect that the ``best team'' wins the tournament, but it is difficult to rank the remaining teams in any reasonable way. At the other extreme, a round-robin tournament among  $n$ teams requires $\binom{n}{2}$ games which may not be possible if $n$ is large. 
The optimal data collection problem \eqref{eq:bilevelOpt} can be interpreted as designing the schedule so that the  rankings are the most informative, and thus we refer to the optimal design problem in this context as \emph{schedule design}. 
In \S \ref{sec:football}, we study the 2011-12 NCAA football schedule and, using the methodology developed in \S \ref{sec:OSD},  propose schedules with the same number of games which are significantly more informative. Using spectral clustering methods to identify highly-connected communities within the division, we argue that the NCAA could improve its notoriously poor rankings by simply scheduling more out-of-conference games. In \S \ref{sec:sythExp}, we continue with the graph constructed in \S \ref{sec:football} and demonstrate using synthetic data that ranking estimates obtained via active sampling are more accurate (in the sense of both the $L^2$-distance and the Kendall-$\tau$ rank distance) than via random sampling. 

\paragraph{Outline} 
In \S \ref{sec:relwork}, we review related work. 
In \S \ref{sec:AlgConn}, we review properties of the eigenvalues of the graph Laplacian and establish notation used in subsequent sections.
In \S \ref{sec:OSD} we study the optimal data collection problem  \eqref{eq:bilevelOpt} and show the reduction of \eqref{eq:bilevelOpt} to a graph synthesis problem. 
In \S \ref{sec:numexp}, we conduct a number of numerical experiments to demonstrate how the optimal data collection methodology developed in \S \ref{sec:OSD} can be employed. 
Finally, we conclude in \S \ref{sec:disc} with a discussion of further directions.

\section{Related work} \label{sec:relwork}
Our work is related to several subject areas, which we discuss in turn: active learning methods for ordinal ranking, statistics and experimental design, sports scheduling, and graph theory. 
This work is an extension of the conference proceeding, \citet{OBO2012}. 
 In particular, the present article 
includes a more extended survey of related work, 
provides a comparison of Erd\"os-R\'enyi graphs and those with maximal algebraic connectivity and a discussion of the implications of this for the optimal data collection problem, 
a more complete discussion of the scalarizing criterion  for the Fisher information \eqref{eq:otherOptCond},  
and  additional examples.

\paragraph{Active learning methods for ordinal ranking} 
 \citet{ailon2011} and \citet{Ailon2012} study the problem of optimally sampling preference labels for the minimum feedback arc-set in weighted tournaments (MFAST) also known as Kemeny-Young ranking. In this work, the dataset considered is ordinal, \ie, only pairwise preference labels are specified, whereas in the present work, the dataset is cardinal, \ie, the preferences are represented as quantitative (real valued) differences between items.   \citet{JamiesonNowakNIPS11} and \citet{Jamieson2012} consider the problem of actively learning the optimal permutation for a collection of alternatives under the assumption that the alternatives have additional geometric structure, namely a Euclidian embedding in a low dimensional space.  \citet{wauthier2013} propose ranking methods based on independent random sampling, which have worse theoretical complexity, but are relatively simple and  easily parallelizable.

\paragraph{Statistics and experimental design}
Excellent surveys of the optimal experiment design literature can be found  in  \citep{Haber:2008,Pukelsheim:2006,Melas:2006,Fedorov:1972}. Methods of optimal experiment design have been applied to ill-posed inverse problems, \eg, in geophysical \citep{Haber:2008} or biomedical imaging \cite[ch. 13, p. 273-290]{HHT2011}, \citep{ChungHaber12,quinnKeoughm02,DiStefano76}. 
It is instructive  to consider the analogy between these applications and the optimal data collection problem considered here. 
In imaging systems, there is a tradeoff between the amount of collected data and the accuracy of the reconstruction, or equivalently, the sparsity of the measurement and the uncertainty in the solution to the inverse problem. 
For application dependent reasons (\eg, high radiation dose to a patient or the cost of collecting data), it is often desirable to place as few sensors as possible while still maintaining an acceptable accuracy in the reconstruction. In the current work, the goal is to construct the best ranking possible from a small number of pairwise comparisons. In both situations, it is desirable to take ``measurements'' which are maximally informative.

\paragraph{Methods for scheduling from sports} As discussed in the introduction, in the context of sports ranking, \eqref{eq:bilevelOpt} is equivalent to optimal schedule design. There are large variations in the methods currently used for sports scheduling.  It is convenient to distinguish between \emph{static} and \emph{dynamic} scheduling. In static scheduling, the schedule is determined prior to the season, independent of the performance of teams throughout the season. Examples of leagues employing static schedules include NCAA  football and Major League Baseball (MLB). In dynamic scheduling, the schedule is determined by past score results. For example, in a single elimination tournament, a team advances to the next round only if they win in the current round. Leagues which partially rely on single elimination tournaments include ATP tennis and  FIFA World Cup soccer. 
\citet{Glickman:2005} proposes a dynamic scheduling method where games are scheduled which maximize the expected gain in information and thus one can view the resulting schedules as a greedy algorithm to learn as much as possible about the rankings. This active learning algorithm is similar to several in the machine learning community, where past observations are used to control the process of gathering future observations, see, for example, \citep{krause2008,seeger2011,SilvaCarin2012}.
While dynamic schedules utilize the results of previous games and can thus be more informative than static schedules, they have the disadvantage that they may not be completely determined prior to the season. The algorithm developed in this paper is a static scheduling method. 

Another type of scheduling in sports focuses on the seeding policy of single-elimination  tournaments 
with the objective of arranging the teams so that the outcome of the tournament agrees with a preexisting ranking  \citep{Glickman2008,DSouza:2010fk,Scarf:2011fk} or an arrangement which favors a particular team \citep{Vu:2001uq}. These objectives depend on a preexisting ranking of the teams, which we do not assume to know in this paper. Another type of tournament scheme is investigated by \citet{Ben-Naim:2007}, where a sequence of rounds of diminishing size are used to determine the best team.

\paragraph{Graph theory}
In this paper, we reduce the schedule design problem \eqref{eq:bilevelOpt} to a graph synthesis problem. 
We focus on the optimality condition given in \eqref{eq:otherOptConda}, which reduces to finding graphs with maximal algebraic connectivity. 
There is a tremendous amount of work on the algebraic connectivity of graphs, originating with studies by Miroslav Fiedler  \citep{Fiedler:1973}. 
Many properties of algebraic connectivity are reviewed in \citep{Mohar:1991,Biyikoglu:2007} and we also review some of these results in \S \ref{sec:AlgConn}. 
The problems arising from the other  optimality conditions, \eqref{eq:otherOptCondb} and \eqref{eq:otherOptCondc}, are less well studied \citep{Grimmett:2010,Ghosh:2006,Ghosh:2008}.

The robustness of a network to node/edge failures is highly dependent on the algebraic connectivity of the graph. 
Also, the rate of convergence of a Markov process on a graph to the uniform distribution is determined by the algebraic connectivity  \citep{boydDiaconisXiao2004,Sun2004}. 
In the ``chip-firing game'' of \citet{Bjorner:1991}, the algebraic connectivity dictates the length of a terminating game. Consequently, algebraic connectivity is a measure of performance for the convergence rate in sensor networks, data fusion, load balancing, and consensus problems \citep{Olfati-Saber:2007}. 

Finally, we mention recent work of \citet{boumal2012} on a problem of estimating a set of rotations from a set of noisy measurements. Here, bounds on synchronization are connected to the algebraic connectivity  of a measurement graph, where the edge weights are proportional to the measurement quality. 

\section{Eigenvalues of the graph Laplacian and the algebraic connectivity} \label{sec:AlgConn}
In this section, we briefly survey relevant results on the eigenvalues of the graph Laplacian and algebraic connectivity. More extensive treatments are given in \citep{Fiedler:1973,Biyikoglu:2007,Mohar:1991,Chung:1997}. In \S \ref{sec:compGraphs}, we recall algorithms for computing graphs with large algebraic connectivity \citep{Ghosh:2006b,Wang:2008}. 

Let  $B\in \mathbb R^{N\times n}$ where $N:= \binom{n}{2}$ be the arc-vertex incidence matrix for the complete directed graph $G=(V,E)$ on $|V| = n$ nodes,
\begin{equation}
\label{eq:incidence}
B_{k,j} = \begin{cases}  
1 & j = \text{head} (k) \\
-1 &j = \text{tail} (k)  \\
0 & \text{otherwise}.
\end{cases}
\end{equation}
Here, we have used the terminology that if an arc $k=(i,j)$ is directed from node $i$ to node $j$ then $i$ is the \emph{tail} and $j$ is the \emph{head} of  arc $k$. The arc orientations (heads and tails of arcs) can be chosen arbitrarily. The matrix $B$ as defined in \eqref{eq:incidence} is also sometimes referred to as the graph gradient \citep{Hirani:2011fk,Jiang2010}. 
Given an edge-weight $w \in \mathbb Z^N_+$, the $w$-weighted graph Laplacian is defined 
$$
\Delta_{w} := B^{t} W B \qquad \text{where } W = \text{diag}(w).
$$  
If we consider a subset of the edges, $\tilde E \subset E$, and let $w$ be the indicator function on $\tilde E$, then  $\Delta_{w}$ is referred to as the un-normalized (symmetric)  graph Laplacian for $(G,\tilde E)$. One may interpret the triple $(G,E,w)$ for $w \in \mathbb Z^N_+$ as a directed multigraph where $w_k$ for $k=(i,j)$ is the number of arcs connecting vertices $i$ and $j$. 
The $w$-weighted degree vector $d\in \mathbb R^n$ is defined by $d_i = \sum_j w_{ij}$. Let $M:=\| w\|_1 = \frac{1}{2} \|d\|_1$ and $d_+$ and $d_-$ denote the maximum and minimum $w$-weighted degrees in the graph.

Let $\lambda_{i}(w)$ for $i=1,\ldots,n$ denote the eigenvalues of the $w$-weighted graph Laplacian, $\Delta_{w}$. 
The eigenvalues are contained in the interval $[0, d_{+}]$. The first eigenvalue of $\Delta_{w}$, $\lambda_{1}$, is zero with corresponding eigenvector $v_{1}=1$. The second eigenvalue, $\lambda_{2}$, is nonzero if and only if the graph is connected. The second eigenvalue is characterized by
\begin{equation}
\label{eq:lam2}
\lambda_{2}(w) = \min_{\substack{ \|v\|=1 \\ \langle v, 1 \rangle = 0}} \| B v\|_{2,w}.  
\end{equation}
In the case where $w$ is the indicator function for an edge set $\tilde E$,  $\lambda_{2}(w)$ is referred to as the \emph{algebraic connectivity} of the graph $G=(V,\tilde E)$. The eigenvector $v_{2}$ corresponding to $\lambda_{2}$ is sometimes called the Fiedler vector after Miroslav Fiedler for his contribution to the subject \citep{Fiedler:1973}. For $w\in \mathbb Z_+^N$, $\lambda_{2}(w)$ is the algebraic connectivity for the multigraph with $w_{ij}$ edges between nodes $i$ and $j$.

Let $w_{i}\in \mathbb Z_+^{N}$ for $i=1,2$ be edge weights on $G$. It follows from \eqref{eq:lam2} that $w_{1} \leq  w_{2}$ implies $\lambda_2(w_{1})\leq \lambda_{2}(w_{2})$. That is, the function $\lambda_{2}(w)$ is non-decreasing in $w$. In particular, if $w_{i} \in \{0,1\}^{N}$ are the indicator functions for  two  edge sets $E_i$, $i=1,2$ and $w_{1} \leq  w_{2}$ (component-wise), then $E_{1}\subseteq E_{2}$ and the more connected graph has greater algebraic connectivity.

Let $U \subset V$ and $\text{cut}(U,U^{c}) := \sum_{i\in U, j\in U^c} w_{ij}$ measure the set of edges connecting $U$ and $U^{c} := V\setminus U$. 
Then the algebraic connectivity is bounded by the normalized graph cut, 
\begin{equation}
\label{eq:edgeCut}
\lambda_{2}(w) \leq \min_{U\subseteq V} \frac{n | \text{cut}(U,U^{c})|}{|U| |U^{c}|}.
\end{equation}
In particular, if $U = \{v\}$ where $v\in V$ is the node with smallest degree, i.e., $d_{v} = d_{-}$, then $d_{v} \leq \frac{2M}{n}$  where $M = \| w\|_1$ and  we obtain
\begin{equation}
\label{eq:lam2ub}
\lambda_{2}(w) \leq \frac{n d_{-}}{ n-1} \leq \frac{2 M}{n-1}.
\end{equation}
Properties of graphs for which the bound in \eqref{eq:lam2ub} is tight have been studied \citep{Fallat:2003}.

If $w \in \{0,1\}^{N}$ is the indicator function for an incomplete edge set $\tilde E$ and $\tilde G := (V,\tilde E)$, then the edge connectivity of a $\tilde G$, $C_e(\tilde G)$, is the minimal number of edges whose removal would result in a disconnected graph, 
$$
C_e(\tilde G) = \min_{A \subset V} \ \sum_{i\in A, j\in A^c} w_{ij}. 
$$ 
The vertex connectivity of $\tilde G$, $C_v(\tilde G)$ is the minimal number of vertices (together with adjacent edges) whose removal would result in a disconnected graph. In this case,  the algebraic connectivity is bounded above by both the edge and vertex connectivities,  
$$
\lambda_{2}(w) \leq C_{v}(\tilde G) \leq C_{e}(\tilde G),
$$
\citep{Fiedler:1973}.  The algebraic connectivity can also be bounded in terms of Cheeger's inequality, Buser's inequality, and the diameter of the graph \citep{Biyikoglu:2007,Mohar:1991,Chung:1997}. 

There are also a number of results for the perturbation of the eigenvalues of $\Delta_w$ under changes to the weights $w$. 
Let $\Delta_{w} v = \lambda v$, $\lambda>0$, $w_{0} = \min_{k} w_{k}$ and $w' = w-w_{0}$. Then 
\begin{equation}
\label{eq:fullGraph}
\Delta_{w'} v = (\lambda - w_{0}n) v.
\end{equation}
This follows from the fact that $B^{t}B = n \ \text{Id} - 1_{n} 1_{n}^{t}$. Thus, adding weight $w_0$ to $w$ simply increases all of the eigenvalues of $\Delta_w$ by $w_0$. 

Consider the weight $w' = w + \delta_{k}$ where $\delta_{k}$ is the indicator function for edge $k$. Then using Weyl's theorem \citep{Horn1990}, we obtain
\begin{align}
\label{eq:maxIncrease}
\lambda_{2}(w') &\leq \lambda_{2}(w) + \| B^{t} \text{diag}(\delta_{k}) B\| 
=   \lambda_{2}(w) + 2.
\end{align}

Consider the weight $w' = w + \delta_{k}$ where $\delta_{k}$ is the indicator function for edge $k$. 
Denote the eigenvalues of the $w$ and $w'$-weighted graph Laplacians by $\lambda_j$ and $\lambda_j'$ respectively. Then the eigenvalues $\lambda$ and $\lambda'$ interlace \citep{Mohar:1991}, \ie, 
\begin{equation}
\label{eq:interlace}
0 = \lambda_1 = \lambda_1' \leq \lambda_2 \leq \lambda_2' \leq \lambda_3 \leq \ldots \leq \lambda_n \leq \lambda_n'. 
\end{equation}

\subsection{Finding graphs with large algebraic connectivity} \label{sec:compGraphs}
In several applications, it is useful to compute graphs with large algebraic connectivity, \eqref{eq:lam2}. 
The problem of finding weights $w\in \mathbb R^{N}$ which maximize $\lambda_{2}(w)$ is a convex optimization problem and can be formulated as a semidefinite program (SDP)  \citep{Ghosh:2006b}. However, if $w\in \mathbb Z_{+}^{N}$, the problem is NP-hard  \citep{Mosk-Aoyama:2008}. This is the case arising in the optimal data collection problem. 

The integer constrained problem may be solved by relaxing to the unconstrained problem and then rounding the solution. This is clearly a lower bound on the optimal solution and, if the values $w$ are large, a reasonable approximation. 
Another approach,  advocated by \citep{Ghosh:2006b,Wang:2008}, is to use the greedy algorithm based on the Fiedler vector described in  Algorithm \ref{alg:Greedy}. This algorithm adds a specified number of edges to an input graph to maximize the algebraic connectivity of the resulting augmented graph. In this work, we refer to graphs produced via this method as \emph{nearly-optimal}. 

\begin{algorithm}[t]
\caption{\label{alg:Greedy} A greedy heuristic for finding integer-valued  edge weights $w$ for which the $w$-weighted graph Laplacian has large second eigenvalue   \citep{Ghosh:2006b,Wang:2008}. See \S \ref{sec:compGraphs}. }
\vspace{.2cm}

\begin{algorithmic}
\STATE{\bfseries Input:} An initial edge weight $w_{0}\in \mathbb Z_+^N$ defined on the complete graph of $n$ nodes and an integer, $\xi$.  \\

\vspace{.2cm}

\STATE{\bfseries Output:} An edge weight, $w\succeq w_0$, such that  $\|w-w_{0}\|_{1} =  \xi$, and $\Delta_{w}$ has large second eigenvalue. \\

\vspace{.2cm}

\STATE Set $w = w_{0}$  (current edge weight)
\FOR {$ \ell = 1$ \ {\bfseries to} \ $\xi$,}
\STATE Compute the second eigenvector, $\displaystyle F = \arg \min_{\substack{ \|v\|=1 \\ \langle v, 1 \rangle = 0}} \| B v\|_{w}$
\STATE Find the edge $(i,j)$ which maximizes $(F_{i} - F_{j})^{2}$
\STATE Set $w = w + \delta_{ij}$
\ENDFOR
\end{algorithmic}
\end{algorithm}

\section{Optimal scheduling using a least squares ranking } \label{sec:OSD}
We assume that each alternative $j=1,\ldots, n$ has a ranking (measure of strength) given by $\phi_{j}$. We consider a complete graph with $n$ nodes representing the alternatives. The edges of the graph are given an arbitrary orientation and enumerated $k = 1,\ldots, \binom{n}{2}\equiv N$. Let $B\in \mathbb R^{N\times n}$ denote the arc-vertex incidence matrix \eqref{eq:incidence} for the complete graph. For each ordered pair $k = (i,j)$, we assume that the pairwise comparison data collected is of the form 
\begin{equation}
\label{eq:errorModel}
y_{k} = (B\phi)_{k} + \epsilon_{k} 
\end{equation} 
where $\epsilon_{k}$ is a random variable with zero mean, \ie,  $\E \epsilon = 0$. 
Let $w_{k} \in \mathbb Z_{+}$ denote the number of pairwise comparisons between  alternatives  $i$ and $j$. 
We assume that the variance of $\epsilon_{k}$ is given by $\sigma^{2}/w_{k}$ for some constant $\sigma$. More comparisons between alternatives $i$ and $j$, reduce the variance in the observed pairwise comparisons.

\paragraph{Ranking} There are several choices for the ranking $\mathcal{R} (y,w)$ in \eqref{eq:R}.  The Gauss-Markov theorem states that the least squares estimator, 
\begin{subequations}
\label{eq:LSE}
\begin{align}
\hat{\phi}_{w} 
\label{eq:LSEb}
&= \arg \min_{\langle \phi, 1\rangle=0} \| B \phi - y \|_{2,w} \\
\label{eq:LSEc}
&= (B^{t} W B)^{\dag}B^{t} W y,
\end{align}
\end{subequations}
is the linear, unbiased ($\mathbb E [\hat{\phi}_{w}] = \phi$) estimator with smallest covariance. In \eqref{eq:LSEc}, $W := \text{diag} (w)$ is the diagonal matrix with entries $W_{kk} = w_k$. 
Equation \eqref{eq:LSEb} can be interpreted as finding a potential function, $\phi$, defined on the vertices, such that the gradient of the potential function agrees with the pairwise comparisons in the least squares sense. The least-squares estimate  \eqref{eq:LSE}  is also sometimes referred to as  HodgeRank \citep{Jiang2010} and is related to the Massey and Colley methods used in sports rankings \citep{Langville:2012}.  
The least squares estimator has proven to have relatively good predictive power when empirically compared against a number of other ranking methods on sports datasets \citep{Barrow2013}  and is the ranking method considered in the present work.

\begin{prop}
\label{prop:FI}
Consider the data model \eqref{eq:errorModel} where $\epsilon$ is a random vector with $\E \epsilon = 0$ and $\mathrm{Var} (\epsilon) = \sigma^{2} W^{-1}$ where $W=\mathrm{diag}(w)$ and $w\in\mathbb Z_{+}^{N}$. The Fisher information of the least squares estimator $\hat{\phi}_{w} $, as defined in \eqref{eq:LSE}, is given by 
\begin{equation}
\label{eq:FI}
\mathrm{F.I.}(\hat{\phi}_{w}) = \sigma^{-2} ( B^{t} W B) = \sigma^{-2} \Delta_w,
\end{equation}
where $\Delta_w$ is the $w$-weighted graph Laplacian. 
\end{prop}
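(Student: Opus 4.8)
The plan is to compute the Fisher information directly from the log-likelihood of the data model \eqref{eq:errorModel}, treating $\phi$ (constrained to $\langle \phi, 1\rangle = 0$) as the unknown parameter. Under the natural Gaussian specification $\epsilon \sim \mathcal N(0,\sigma^{2}W^{-1})$, the density of $y$ given $\phi$ is proportional to $\exp\!\big(-\tfrac{1}{2\sigma^{2}}(y-B\phi)^{t}W(y-B\phi)\big)$, since the inverse covariance is $(\sigma^{2}W^{-1})^{-1}=\sigma^{-2}W$ and the normalizing factor $\det(2\pi\sigma^{2}W^{-1})^{-1/2}$ does not involve $\phi$. First I would write the log-likelihood $\ell(\phi) = -\tfrac{1}{2\sigma^{2}}(y-B\phi)^{t}W(y-B\phi) + c$ and differentiate in $\phi$.

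Second, I would compute the score $\nabla_{\phi}\ell = \sigma^{-2}B^{t}W(y-B\phi)$ and the Hessian $\nabla^{2}_{\phi}\ell = -\sigma^{-2}B^{t}WB = -\sigma^{-2}\Delta_{w}$. Because the Hessian is deterministic (independent of $y$), the Fisher information follows immediately as $\mathrm{F.I.}(\hat\phi_{w}) = -\E[\nabla^{2}_{\phi}\ell] = \sigma^{-2}\Delta_{w}$, which is the claim.

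As a cross-check, and to make the phrase ``Fisher information of the estimator'' precise via the Cram\'er--Rao/efficiency interpretation, I would also compute the covariance of $\hat\phi_{w}$ directly. Using \eqref{eq:LSEc} together with $\langle \phi, 1\rangle = 0$ gives $\hat\phi_{w} = \phi + \Delta_{w}^{\dag}B^{t}W\epsilon$, whence $\mathrm{Var}(\hat\phi_{w}) = \Delta_{w}^{\dag}B^{t}W(\sigma^{2}W^{-1})WB\Delta_{w}^{\dag} = \sigma^{2}\Delta_{w}^{\dag}\Delta_{w}\Delta_{w}^{\dag} = \sigma^{2}\Delta_{w}^{\dag}$, using the pseudoinverse identity $\Delta_{w}^{\dag}\Delta_{w}\Delta_{w}^{\dag} = \Delta_{w}^{\dag}$. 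Inverting on the range $\mathbf 1^{\perp}$ recovers $\sigma^{-2}\Delta_{w}$, confirming that the least squares estimator attains the bound and is efficient.

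The main obstacle is the rank deficiency of $\Delta_{w}$: it has a one-dimensional kernel spanned by $\mathbf 1$ (for a connected graph), so neither $\Delta_{w}$ nor the covariance is invertible on all of $\mathbb R^{n}$. I would handle this by restricting the parameter to the hyperplane $\langle \phi, 1\rangle = 0$, on which the constraint removes the gauge freedom and $\Delta_{w}^{\dag}$ acts as a genuine inverse; this is exactly why the Moore--Penrose pseudoinverse appears in \eqref{eq:LSEc} and why the information is reported as the singular matrix $\sigma^{-2}\Delta_{w}$. A secondary subtlety is that defining a Fisher information requires the Gaussian specification above: the bare zero-mean/variance hypotheses of the statement only deliver the covariance computation of the third step, so identifying that covariance's inverse with the Fisher information rests on the normality (equivalently, efficiency) of the estimator.
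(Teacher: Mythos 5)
Your proposal is correct, and its primary route differs from the paper's. The paper proves the result by computing the covariance of the estimator: it expands $\hat\phi_w = \phi + (B^tWB)^\dag B^tW\epsilon$, obtains $\mathrm{Var}(\hat\phi_w) = \sigma^2\Delta_w^\dag$, and then simply asserts that, since the estimator is unbiased, the Fisher information is the pseudoinverse of this covariance. You instead compute the Fisher information from its definition, via the Gaussian log-likelihood: the score is $\sigma^{-2}B^tW(y-B\phi)$, the Hessian is the deterministic matrix $-\sigma^{-2}B^tWB$, and the information is its negated expectation, $\sigma^{-2}\Delta_w$. Your covariance computation then appears only as a cross-check, and it coincides exactly with the paper's entire argument. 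What your route buys is rigor on the one point the paper glosses over: unbiasedness alone does not make the inverse covariance equal to the Fisher information --- that identification requires efficiency, i.e., the Gaussian specification of $\epsilon$, which the proposition's hypotheses ($\E\epsilon=0$, $\mathrm{Var}(\epsilon)=\sigma^2W^{-1}$) do not literally state. You flag this correctly; the authors implicitly assume normality (they later verify it empirically for the Yahoo!\ data). Your handling of the rank deficiency --- restricting the parameter to $\langle\phi,1\rangle=0$ so that $\Delta_w^\dag$ acts as a genuine inverse on $\mathbf 1^\perp$ --- is also the right way to make sense of the pseudoinverse in \eqref{eq:LSEc}, a point the paper leaves implicit.
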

\begin{proof}
Let $\hat{\phi}_{w}$ be the least squares estimator  \eqref{eq:LSE} for $\phi$ in \eqref{eq:errorModel}. 
We first compute
\begin{align*}
\hat{\phi}_{w} &= (B^{t} W B)^{\dag} B^{t} W y = (B^{t} W B)^{\dag} B^{t} W (B \phi + \epsilon) = \phi +  (B^{t} W B)^{\dag} B^{t} W \epsilon.
\end{align*}
Thus,
\begin{align*}
\text{Var} (\hat{\phi}_{w} ) &= \E \left[ (\hat{\phi}_{w}  - \phi) (\hat{\phi}_{w} -\phi)^{t} \right] =(B^{t} W B)^{\dag} B^{t} W  \E \left[ \epsilon \epsilon^{t} \right] W B  (B^{t} W B)^{\dag}.
\end{align*}
Assuming that $\E \left[ \epsilon \epsilon^{t} \right] = \sigma^{2} W^{-1}$, we obtain 
\begin{equation}
\label{eq:Var}
\text{Var}(\hat{\phi}_{w}) = \sigma^2 ( B^{t} W B)^{\dag} = \sigma^2 \Delta_w^{\dag},
\end{equation}
which is the Moore-Penrose pseudoinverse of the $w$-weighted graph Laplacian. 
Since the least squares ranking is unbiased, \ie, $\E \hat \phi_{w} = \phi$, the Fisher information is the pseudoinverse of the covariance matrix, $\text{Var}( \hat \phi_{w})$. 
 \end{proof}

\paragraph{Optimal Data Collection}
The optimal data collection problem \eqref{eq:bilevelOpt} is a scalarization of maximizing $\text{F.I.}(\hat \phi_w)$ in the sense of the semi-definite ordering (\ie, $A \geq B$ if $A-B \succeq 0$). 
Traditional optimality criteria are functions of the eigenvalues of $\text{F.I.}(\hat{\phi}_{w})$ such as given in \eqref{eq:otherOptCond} \citep{Haber:2008,Pukelsheim:2006,Melas:2006,Fedorov:1972}. 
\begin{prop}
\label{prop:minLam2}
Consider the data model \eqref{eq:errorModel} with $\epsilon$ as in Prop. \ref{prop:FI} and 
let $\hat{\phi}_{w}$ be the least squares estimator \eqref{eq:LSE}. The three optimality criteria \eqref{eq:otherOptCond}  for the bi-level optimization problem \eqref{eq:bilevelOpt} are given by 
\begin{subequations}
\label{eq:FIrank}
\begin{align}
\label{eq:FIranka}
f\left(\mathrm{F.I.}(\hat{\phi}_{w}) \right) = \lambda_2(w)  &&\text{E-optimal} \\
\label{eq:FIrankb}
f\left(\mathrm{F.I.}(\hat{\phi}_{w}) \right) = - \sum_{i\geq2} \lambda^{-1}_{i}(w)  &&\text{A-optimal} \\
\label{eq:FIrankc}
f\left(\mathrm{F.I.}(\hat{\phi}_{w}) \right) = \prod_{i\geq2} \lambda_{i}(w)  &&\text{D-optimal},
\end{align}
\end{subequations}
where  $\lambda_{i}(w)$ for $i=1,\ldots,n$ denote the eigenvalues of the $w$-weighted graph Laplacian, $\Delta_{w}$. 
\end{prop}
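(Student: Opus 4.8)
The plan is to start from Proposition \ref{prop:FI}, which already identifies the Fisher information matrix as $\mathrm{F.I.}(\hat{\phi}_{w}) = \sigma^{-2}\Delta_w$, and then evaluate each of the three scalarizing functions in \eqref{eq:otherOptCond} on this matrix. The one conceptual point to settle first is that $\Delta_w$ is singular: its smallest eigenvalue $\lambda_1(w)=0$ has eigenvector $1$, which is precisely the direction removed by the normalization constraint $\langle \phi, 1\rangle = 0$ in \eqref{eq:LSE}. Because the estimator lives on the hyperplane $\{1\}^\perp$, the effective parameter space is $(n-1)$-dimensional, and the relevant object is the restriction of $\sigma^{-2}\Delta_w$ to $\{1\}^\perp$. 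For a connected graph this restriction is positive definite with eigenvalues $\sigma^{-2}\lambda_i(w)$, $i = 2,\ldots,n$, which is exactly why each formula in \eqref{eq:FIrank} sums or multiplies only over $i\geq 2$.

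With this reduction in hand, each criterion is a one-line eigenvalue computation. For the E-optimal case \eqref{eq:otherOptConda}, the smallest eigenvalue of the restricted Fisher information is $\sigma^{-2}\lambda_2(w)$; for the A-optimal case \eqref{eq:otherOptCondb}, the restricted inverse has eigenvalues $\sigma^{2}\lambda_i^{-1}(w)$ for $i\geq 2$, so minus its trace equals $-\sigma^{2}\sum_{i\geq 2}\lambda_i^{-1}(w)$; for the D-optimal case \eqref{eq:otherOptCondc}, the determinant over the $(n-1)$-dimensional space is $\sigma^{-2(n-1)}\prod_{i\geq 2}\lambda_i(w)$. Since $\sigma$ is a fixed positive constant, each of these differs from the stated expression only by a positive multiplicative factor, which does not change the maximizer in \eqref{eq:bilevelOpt}; I would therefore absorb the $\sigma$-powers (equivalently set $\sigma=1$) and recover \eqref{eq:FIranka}--\eqref{eq:FIrankc} exactly.

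The only real subtlety---and the step I would flag as the crux---is justifying that one works on the reduced space rather than literally applying $f$ to the singular matrix $\sigma^{-2}\Delta_w$. I would make this rigorous either by passing to the Moore--Penrose pseudoinverse already used in \eqref{eq:Var} (so that the ``trace of the inverse'' and ``determinant'' are naturally read off the nonzero spectrum) or, more cleanly, by fixing an orthonormal basis $Q\in\mathbb R^{n\times(n-1)}$ of $\{1\}^\perp$ and noting that the covariance \eqref{eq:Var} of the reduced estimator $Q^t\hat{\phi}_{w}$ is $\sigma^{2}(Q^t\Delta_w Q)^{\dag}$ with $Q^t\Delta_w Q$ having spectrum $\{\lambda_i(w)\}_{i\geq 2}$. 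Once the bookkeeping of the null direction is settled, the three identities follow immediately from the spectral definitions in \eqref{eq:otherOptCond}.
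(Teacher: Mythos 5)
Your proposal is correct and follows essentially the same route as the paper's own (two-sentence) proof: identify $\mathrm{F.I.}(\hat{\phi}_{w})$ with the weighted graph Laplacian via Proposition \ref{prop:FI}, note that for a connected graph only $\lambda_1$ vanishes, and read off the three criteria from the nonzero spectrum. Your additional care in restricting to $\{1\}^{\perp}$ (or passing to the pseudoinverse) and in absorbing the $\sigma$-powers merely makes explicit what the paper leaves implicit.
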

\begin{proof}
For a connected graph, the only zero eigenvalue of the graph Laplacian is the first one. The expressions in \eqref{eq:FIrank} then follow directly from  $\mathrm{F.I.}(\hat{\phi}_{w}) = \Delta_w$, as shown in Prop. \ref{prop:FI}, and the optimal criteria definitions in \eqref{eq:otherOptCond}. 
\end{proof}
Proposition \ref{prop:FI} shows that $\text{F.I.} (\hat{\phi}_{w})$ doesn't depend on the scores, $y$. Consequently, the constraint in the optimal data collection problem \eqref{eq:bilevelOptb} decouples. Using the E-optimal criteria \eqref{eq:FIranka}, the bilevel optimization problem \eqref{eq:bilevelOpt} reduces to the following eigenvalue optimization problem 
\begin{align}
\label{eq:minLam2}
\max_{w} \ \ & \lambda_{2}(w)\\
\nonumber
\text{ such that} \ \ & w \in \mathbb Z^{N}_{+}, \ \ w \succeq w_{0},  \ \  \| w-w_{0} \|_{1} \leq \xi.
\end{align}
Equation \eqref{eq:minLam2} can be interpreted as the graph synthesis problem of adding $\xi$ edges to the multigraph representing the dataset to maximize the algebraic connectivity.

\begin{remark}
\label{rem:OtherOptCond}
The A- and D-optimal conditions given in Proposition \ref{prop:minLam2} also have interesting interpretations in terms of the graph. 
By Kirchhoff's matrix-tree theorem, the D-optimal condition can be interpreted as the number of spanning trees within the graph \citep{Ghosh:2006}.  
The A-optimal condition is the  total effective resistance of a electric circuit constructed by identifying each edge of the graph with a  resistor of equal resistance \citep{Ghosh:2006,Ghosh:2008} and is related to the return time for a reversible Markov chain \citep{Grimmett:2010}.
\end{remark}

We also comment that the T-optimality condition, $ \text{tr} \left(\mathrm{F.I.}(\hat{\phi}_{w}) \right)$, which is another criteria commonly used in optimal design, simplifies in this setting to $ \text{tr} (\Delta_w) = \| w \|_1$, which is simply the total number of pairwise comparisons.

\section{Numerical experiments} \label{sec:numexp}
In this section, we study graphs corresponding to datasets which have informative rankings, which, by Proposition \ref{prop:minLam2}, are those with large algebraic connectivity. 
In \S \ref{sec:simpGraphs}, we consider structured graphs for which the eigenvalues of the Laplacian can be analytically computed and  small graphs with $\leq 5$ edges. 
In \S \ref{sec:compAlgConn}, we compare the expected algebraic connectivity of Erd\"os-R\'enyi random graphs with graphs obtained using the greedy algorithm described in \S \ref{sec:compGraphs}. 
In \S \ref{sec:movies}, we consider the informativeness of the ranking for the  Yahoo! Movie user ratings dataset. 
In \S \ref{sec:football}, we discuss the algebraic connectivity for the graph corresponding to the 2011-12 NCAA Division I   football schedule.  In \S \ref{sec:sythExp}, we continue with the graph constructed in \S \ref{sec:football} and demonstrate using synthetic data that ranking estimates obtained via active sampling are more accurate (in an $L^2$ sense) than via random sampling.

 \begin{table}[t]
\begin{center}
{\footnotesize
\begin{tabular}{l | S S S  S}
&&&& complete\\
& path, $P_{n}$ &  cycle, $C_{n}$ & complete, $K_{n}$ & bipartite, $K_{n,\ell}$ \\
\hline
diagram  &
\begin{tikzpicture} 
[inner sep=.4mm,
dot/.style={circle,draw=blue!50,fill=blue!20,thick}] 
\node (1) at (0,1)[dot] {1};
\node (2) at (.8,1)[dot] {2}
	edge[,thick] (1);
\node (3) at (1.6,1)[dot]{3}
	edge[,thick]  (2);
\node (4) at (2.4,1)[dot]{4}
	edge[,thick] (3);
\end{tikzpicture}
&
\begin{tikzpicture} 
[inner sep=.4mm,
dot/.style={circle,draw=blue!50,fill=blue!20,thick}] 
\node (1) at (0,.8)[dot] {1};
\node (2) at (.8,.8)[dot] {2}
	edge[,thick] (1);
\node (3) at (.8,0)[dot]{3}
	edge[,thick]  (2);
\node (4) at (0,0)[dot]{4}
	edge[,thick] (3)
	edge[,thick] (1);
\end{tikzpicture}
&
\begin{tikzpicture} 
[inner sep=.4mm,
dot/.style={circle,draw=blue!50,fill=blue!20,thick}] 
\node (1) at (0,.8)[dot] {1};
\node (2) at (.8,.8)[dot] {2}
	edge[,thick] (1);
\node (3) at (.8,0)[dot]{3}
	edge[,thick]  (1)
	edge[,thick]  (2);	
\node (4) at (0,0)[dot]{4}
	edge[,thick] (1)
	edge[,thick] (2)
	edge[,thick] (3);
\end{tikzpicture}
& 
\begin{tikzpicture} 
[inner sep=.4mm,
dot/.style={circle,draw=blue!50,fill=blue!20,thick}] 
\node (1) at (0,1.2)[dot] {1};
\node (2) at (0,.4)[dot] {2};
\node (3) at (.8,1.6)[dot] {3}
	edge[,thick] (1)
	edge[,thick] (2);
\node (4) at (.8,.8)[dot] {4}
	edge[,thick] (1)
	edge[,thick] (2);
\node (5) at (.8,0)[dot] {5}
	edge[,thick] (1)
	edge[,thick] (2);
\end{tikzpicture}
\\
eigenvalues &$2-2\cos(\pi k /n)$ &$2-2\cos(2\pi k/n)$& $0_{1}$ , $n_{n-1}$ & $0_{1}$ , $n_{\ell-1}$, \\ 
&$k=0,\ldots n-1$ & $k=0,\ldots n-1$&&$\ell_{n-1}$, $(\ell+n)_{1}$  \\
\\ 
alg. conn. \eqref{eq:lam2} &$2-2\cos(\pi/n)$ & $2-2\cos(2 \pi/n)$ & $n$& $\min(n,\ell)$ \\
edge conn.  &1&2&$n-1$& $\min(n,\ell)$ \\
diameter & $n-1$ & $\lfloor n/2 \rfloor$ & 1 & 2
\end{tabular}
}
\end{center}
\caption{A comparison of several measures of connectivity for 4 well-known graphs. We assume $n\geq3$. Subscripts on the eigenvalues denote multiplicity and $\lfloor \cdot \rfloor$ indicates the floor function. See \S \ref{sec:simpGraphs}. }
\label{tab:PropGraphs}
\end{table}%

 \begin{figure}[t!]
\begin{center}
\includegraphics[width=.56\textwidth]{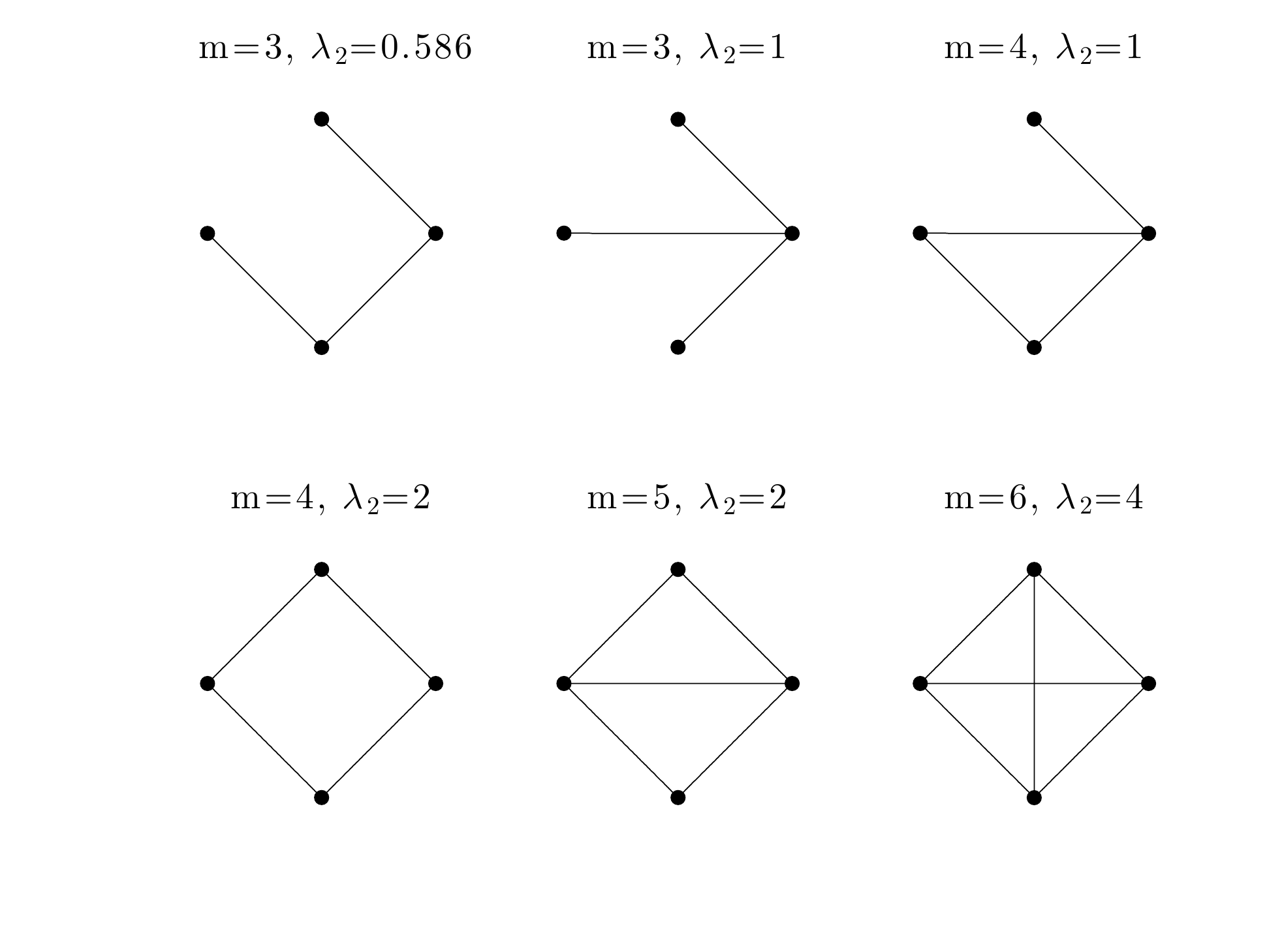} \\ 
\includegraphics[width=.78\textwidth]{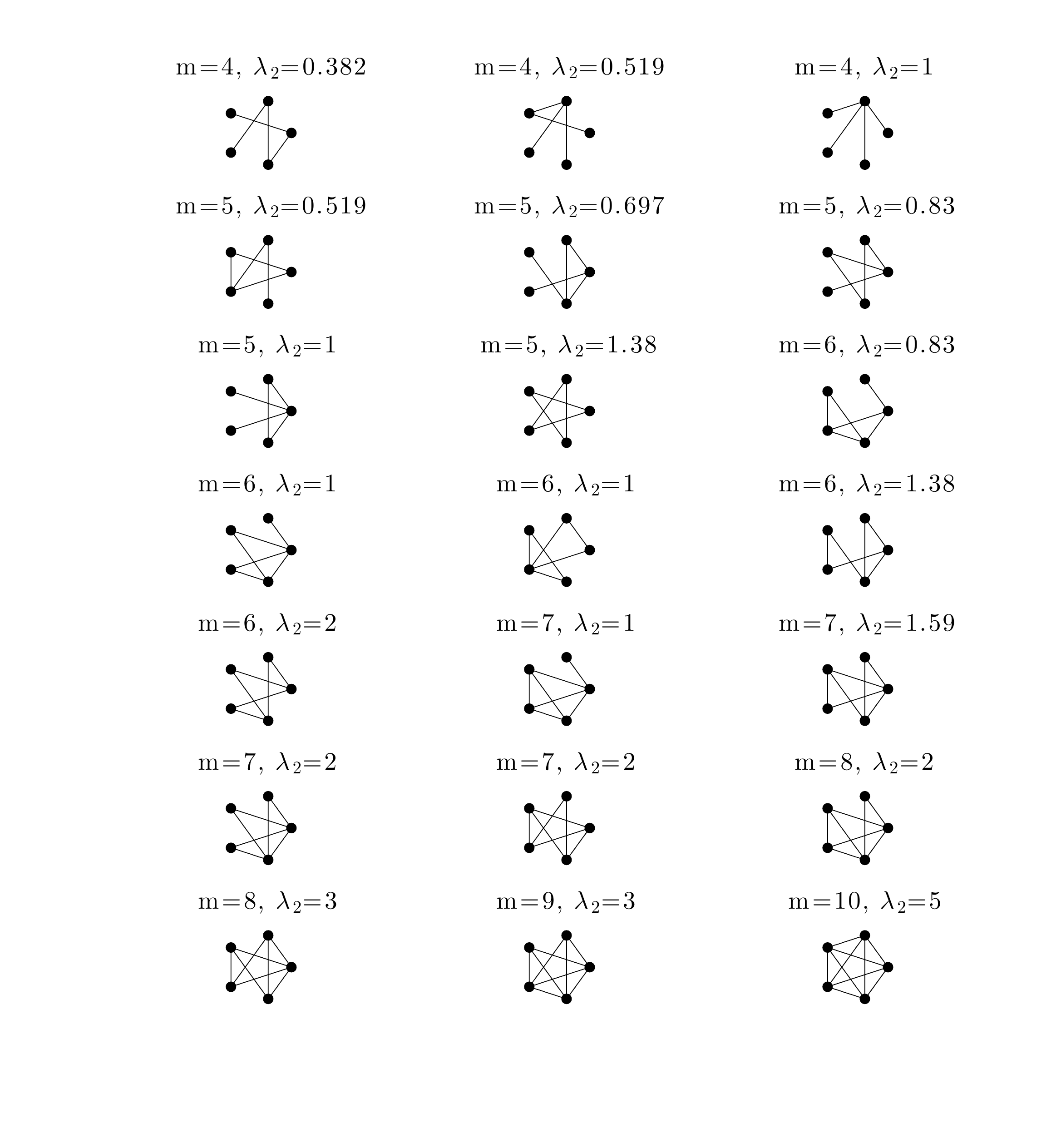} \\ \vspace{-1cm}
\caption{The $4$- and $5$-node connected graphs and their  algebraic connectivity, $\lambda_{2}$. Graphs with large algebraic connectivity represent datasets with informative rankings. See \S \ref{sec:simpGraphs}. \label{fig:simpGraphs}}
\end{center}
\end{figure}

\subsection{Algebraic connectivity for example  graphs} \label{sec:simpGraphs}
In this section, we give results on the algebraic connectivity for graphs with easily computable spectra and graphs with a small number of nodes. 
In Table \ref{tab:PropGraphs}, we tabulate the eigenvalues, algebraic connectivity \eqref{eq:lam2}, edge connectivity, vertex connectivity,  and diameter  for 4 well-known graphs.

The number of distinct $n$-node, connected, unlabeled graphs for $n=$1, 2, 3, \ldots  are  1, 1, 2, 6, 21, 112, 853, 11117, 261080,\ldots (Sloane A001349\footnote{The On-Line Encyclopedia of Integer Sequences, published electronically at \url{http://oeis.org}}). In Fig. \ref{fig:simpGraphs} we plot, for $n=4$ and $n=5$, each of these graphs together with the algebraic connectivity, $\lambda_{2}$. 
 In Fig. \ref{fig:simpGraphs}, we observe that as the number of edges, $m$, is increased, the algebraic connectivity, $\lambda_{2}$, generally increases. Furthermore, for a fixed number of edges,  $m$, the algebraic connectivity can  vary significantly. For $m=5$, $6$, and $7$, the value of $\lambda_{2}$ varies by a factor  $\geq2$. 
For $m=5$, the graph with smallest $\lambda_{2}$ has small edge connectivity (and hence small algebraic connectivity) and the graph with largest $\lambda_{2}$ has nodes with equal degree. 
 These small graphs beautifully illustrate the bounds given in \S \ref{sec:AlgConn}.
 
In Fig. \ref{fig:simpleExample}, we illustrate the effect of adding edges on the algebraic connectivity of a graph by studying \eqref{eq:minLam2} where $\| w_0 \|_1 =6$ and $\xi=1$. Although the graphs in Fig. \ref{fig:simpleExample} are small in size, it is already nontrivial to determine which edge should be added to maximally increase the algebraic connectivity. We observe that for graphs with low algebraic connectivity, a significant gain can be achieved, while the results for graphs with relatively high algebraic connectivity are modest. In the lowermost panel in Fig. \ref{fig:simpleExample}, the algebraic connectivity remains constant as an edge is added.  This follows from the fact that the second eigenvalue for the graph on the left has multiplicity 2 and the interlacing property described in \eqref{eq:interlace}. 

\begin{figure}[t!]%
	\begin{center}%
		\includegraphics[width=.3\textwidth]{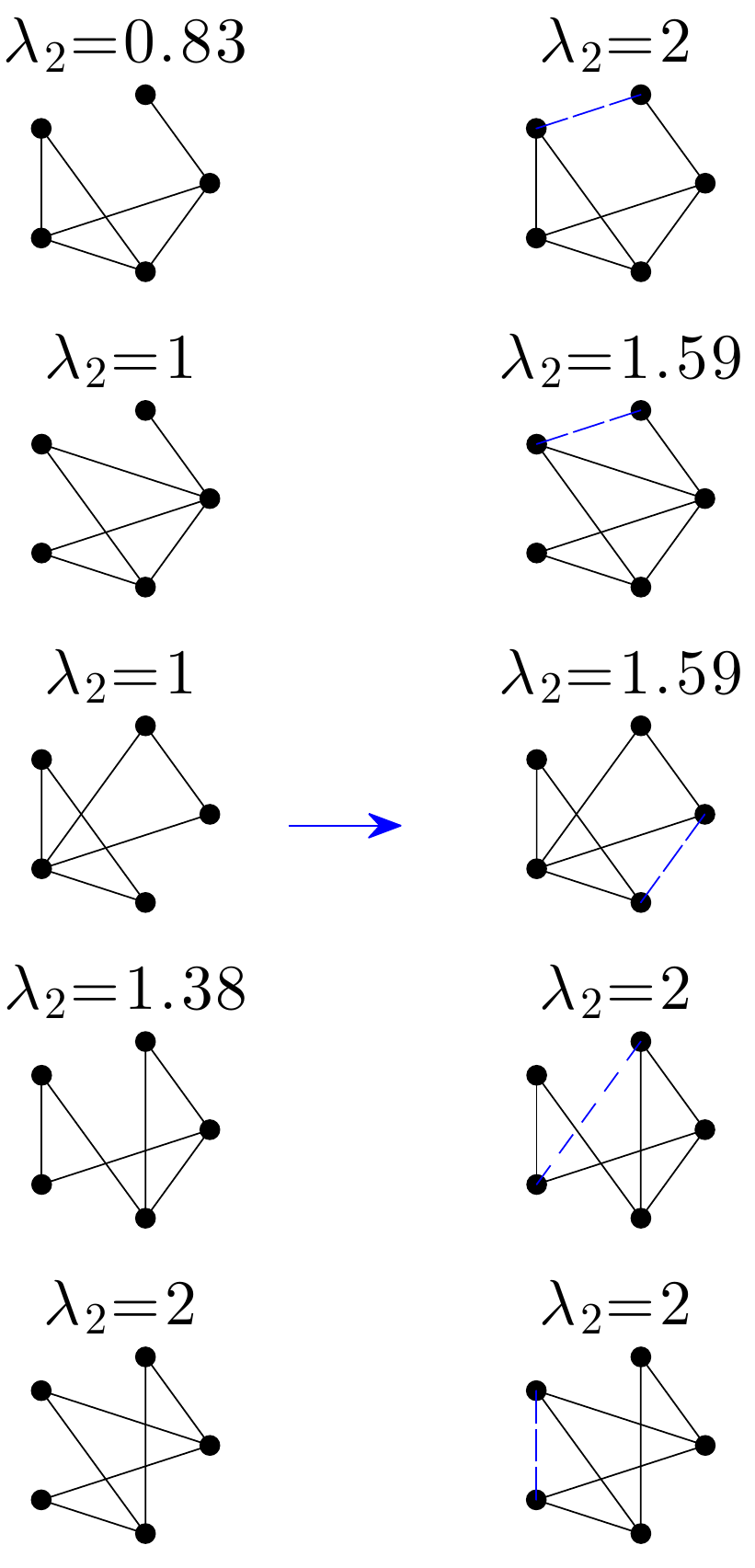}%
	\caption{Targeted data collection for small  graphs. 
	{\bf (left)} The five topologically distinct connected graphs with $n=5$ nodes and $m=6$ edges. {\bf (right)} For each edgeset on the left, we select one additional edge (blue dashes) so that $\lambda_2$ for the perturbed graph is maximal. The algebraic connectivity  of each graph is indicated. By Prop. \ref{prop:minLam2}, a ranking on a dataset represented by a graph on the right is more informative than one from a graph on the left. See \S \ref{sec:simpGraphs}.
 \label{fig:simpleExample}}%
	\end{center}%
\end{figure}%

Further consideration of the algebraic connectivity for certain families of graphs is considered in \citet{Kolokolnikov2013}. Here, it is observed that the greedy algorithm (Algorithm \ref{alg:Greedy}) is unable to discover certain small, structured graphs with maximal algebraic connectivity.

\subsection{Algebraic connectivity of Erd\"os-R\'enyi random graphs and computed nearly-optimal graphs} 
\label{sec:compAlgConn}

We consider the Erd\"os-R\'enyi random graph model $G(n,p)$ containing graphs with $n$ nodes and edges included with probability $p$, independent from every other edge. The expected number of edges for a graph in $G(n,p)$ is $p \binom{n}{2} $ and the threshold for connectedness is $p_{c} = \frac{\log n}{n}$.

There are several results on the spectrum of the graph Laplacian for Erd\"os-R\'enyi graphs, especially in the limit $n\uparrow \infty$; see, for example, \citep{juhasz1991,Chung2003,Feige:2005,coja2007,jamakovic2008,oliveira:2009,Chung:2011,Kolokolnikov2013b}. The algebraic connectivity of  Erd\"os-R\'enyi, Watts-Strogatz, and Barab\'asi-Albert random graphs has been studied numerically in \citep{Jamakovic:2007}.   The algebraic connectivity of a Watts-Strogatz graph is known to have a phase transition \citep{Olfati-Saber:2007}. 

We will utilize the following elementary upper bound on the algebraic connectivity, analogous to \eqref{eq:lam2ub}, derived using a concentration inequality.  

\begin{prop}
\label{thm:ERalgConn}
Let $\epsilon > 0$ and assume $n$ to be even. With probability at least $1-\epsilon$, the algebraic connectivity, $\lambda_{2}$, of an Erd\"os-R\'enyi graph $G(n,p)$ satisfies
\begin{align}
\label{eq:ERlam2ub}
\lambda_{2} \leq  np + 4 n^{-2}  \sqrt{ 2 \log(1/\epsilon) }.
\end{align}
\end{prop}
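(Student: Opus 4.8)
The plan is to bound $\lambda_2$ deterministically by the cut size for a single well-chosen partition and then control that cut size with a concentration inequality. The starting point is the edge-cut bound \eqref{eq:edgeCut}, which asserts that for \emph{every} $U \subseteq V$,
\begin{equation*}
\lambda_2(w) \leq \frac{n\,|\text{cut}(U,U^c)|}{|U|\,|U^c|}.
\end{equation*}
The key observation is that, because \eqref{eq:edgeCut} is a minimum over all $U$, to obtain an upper bound on $\lambda_2$ I need only exhibit one partition and estimate its cut; no union bound over partitions is required. This is precisely what makes an upper bound on $\lambda_2$ so much easier than a matching lower bound would be.

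First I would fix a balanced partition. Since $n$ is even, choose any $U$ with $|U| = |U^c| = n/2$, so $|U|\,|U^c| = n^2/4$ and \eqref{eq:edgeCut} specializes to $\lambda_2 \leq \tfrac{4}{n}\,\text{cut}(U,U^c)$. In the model $G(n,p)$ the cut $\text{cut}(U,U^c) = \sum_{i\in U,\, j\in U^c} w_{ij}$ is a sum of exactly $n^2/4$ independent $\{0,1\}$-valued Bernoulli$(p)$ random variables, so $\E[\text{cut}(U,U^c)] = p\,n^2/4$. Substituting the mean into the cut bound already recovers the leading term, since $\tfrac{4}{n}\cdot\tfrac{pn^2}{4} = np$.

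Next I would apply a concentration inequality to the upper tail of $\text{cut}(U,U^c)$. Hoeffding's inequality for a sum $X$ of $m = n^2/4$ independent $[0,1]$-valued variables gives $\pr(X - \E X \geq t) \leq \exp(-2t^2/m)$; setting the right-hand side equal to $\epsilon$ and solving for $t$ produces a deviation threshold of order $\sqrt{m\log(1/\epsilon)}$. Hence, with probability at least $1-\epsilon$, $\text{cut}(U,U^c) \leq \tfrac{pn^2}{4} + t$, and feeding this into $\lambda_2 \leq \tfrac{4}{n}\,\text{cut}(U,U^c)$ yields a bound of the form $\lambda_2 \leq np + c\,\sqrt{2\log(1/\epsilon)}$, matching the shape of \eqref{eq:ERlam2ub}.

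The structural content of the argument is entirely captured by these two ideas — reduce to a single balanced cut via \eqref{eq:edgeCut}, then concentrate a sum of independent Bernoullis — and each is routine. The step I expect to require the most care is pinning down the exact numerical constant and its $n$-dependence in the deviation term: the natural Hoeffding computation gives a fluctuation of order $\sqrt{\log(1/\epsilon)}$ that does not shrink in $n$, so reproducing the precise factor stated in \eqref{eq:ERlam2ub} would hinge on the exact normalization and on whether one uses Hoeffding, a sharper Bernstein/Chernoff bound exploiting the variance $p(1-p)\le \tfrac14$, or the minimum-degree cut $|U|=1$ in place of the balanced cut. Reconciling the bookkeeping with the claimed scaling is therefore where I would concentrate my attention.
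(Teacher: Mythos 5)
Your proposal follows exactly the paper's route: fix a single balanced bipartition (possible since $n$ is even), invoke the cut bound \eqref{eq:edgeCut} to get $\lambda_2 \le \tfrac{4}{n}\,\mathrm{cut}(U,U^c)$ with $\mathrm{cut}(U,U^c)\sim\mathcal B(n^2/4,p)$, and apply Hoeffding to the upper tail of that single cut --- including your observation that no union bound over partitions is needed. The one place you hedge, the exact constant and its $n$-dependence, is worth resolving, and your instinct is correct. With $m=n^2/4$ summands and threshold $t=an/4$, Hoeffding gives $\pr\left(C-\E C\ge an/4\right)\le\exp\left(-2(an/4)^2/(n^2/4)\right)=\exp(-a^2/2)$, so solving $\exp(-a^2/2)=\epsilon$ yields $a=\sqrt{2\log(1/\epsilon)}$: a deviation of constant order in $n$, consistent with the fact that $C$ has standard deviation of order $n$ and hence $4C/n$ fluctuates at order one. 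The paper's proof instead records the exponent as $\exp(-a^2n^4/32)$ --- apparently by applying the sample-mean form $\exp(-2t^2m)$ to a deviation of the sum --- and the resulting prefactor $4n^{-2}$ in \eqref{eq:ERlam2ub} cannot be obtained from this argument (nor from a Bernstein refinement, which also gives a constant-order deviation). So your version of the computation, giving $\lambda_2\le np+\sqrt{2\log(1/\epsilon)}$ with probability at least $1-\epsilon$, is the correct conclusion of this line of proof; the discrepancy you flagged is an error in the stated proposition rather than a gap in your argument, and it is immaterial to how the bound is used later, where only the leading term $np = 2\E[m]/(n-1)$ matters.
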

\begin{proof}
Choose any subset $U\subset V$ with $|U|=\frac{n}{2}$. Equation \eqref{eq:edgeCut} implies that $\lambda_{2} \leq \frac{4 C}{n}$ where $C\sim \mathcal B(\frac{n^{2}}{4},p)$. For $a>0$, we compute
\begin{align*}
\pr \left( \lambda_{2} \geq n p  + a \right) &\leq  \pr \left( 4C/n  \geq n p  + a \right) 
= \pr \left( C - p n^{2}/4    \geq  + an/4 \right) 
\leq \exp \left( - a^{2} n^{4}/32 \right)
\end{align*}
where the last inequality is due to Hoeffding. Setting $a= 4 n^{-2} \sqrt{2 \log(1/\epsilon) }$, we find that
$ \pr \left( \lambda_{2} \geq np + a \right) \leq \epsilon $ as desired. 
\end{proof}
For a random graph $G(n,p)$, the number of edges $m \sim \mathcal B(N,p)$ where $N:=n(n-1)/2$. Thus, $\E[m] = pN$ and we may restate \eqref{eq:ERlam2ub} as: with probability at least $1-\epsilon$, 
\begin{equation}
\label{eq:ERlam2up2}
\lambda_{2} \leq \frac{2 E[m]}{n-1} + 4 n^{-2} \sqrt{ 2 \log(1/\epsilon) }. 
\end{equation}
Indeed,  the first term on the right hand side of \eqref{eq:ERlam2up2} matches the right hand side of  \eqref{eq:lam2ub}.

In Figure \ref{fig:maxAlgConn}, we plot, for $n = 50$ (left) and $n=100$ (right) and $p=.4$ (blue), $p=.6$ (red), and $p=.8$ (green) the value of $m$ vs. $\lambda_{2}$ for 5,000 randomly generated Erd\"os-R\'enyi graphs. The mean values obtained are indicated by circles. 
We use the greedy algorithm  described in \S \ref{sec:compGraphs}  (see Algorithm \ref{alg:Greedy}) with initial graph  taken to be the path with $n$ vertices, $P_{n}$, to compute nearly-optimal graphs with $n$-nodes and $m$-edges. The solid black line in Figure \ref{fig:maxAlgConn} represents the value of 
$\lambda_{2}$ for these graphs. 
Finally, the dashed blue line in Figure \ref{fig:maxAlgConn} represents the upper bound on $\lambda_{2}$ given in \eqref{eq:lam2ub} (compare also to \eqref{eq:ERlam2up2}). 

We observe in Figure \ref{fig:maxAlgConn} that nearly-optimal graphs  have values which are indeed close to the upper bound on the algebraic connectivity, indicating (i) the upper bound is nearly-tight and (ii) the greedy heuristic (Algorithm \ref{alg:Greedy}) produces graphs which are nearly-optimal. We also observe that the algebraic connectivity of nearly-optimal graphs is significantly better than the values for an average Erd\"os-R\'enyi random graph.

\begin{figure}[t!]
\begin{center}
\includegraphics[width=.49\textwidth]{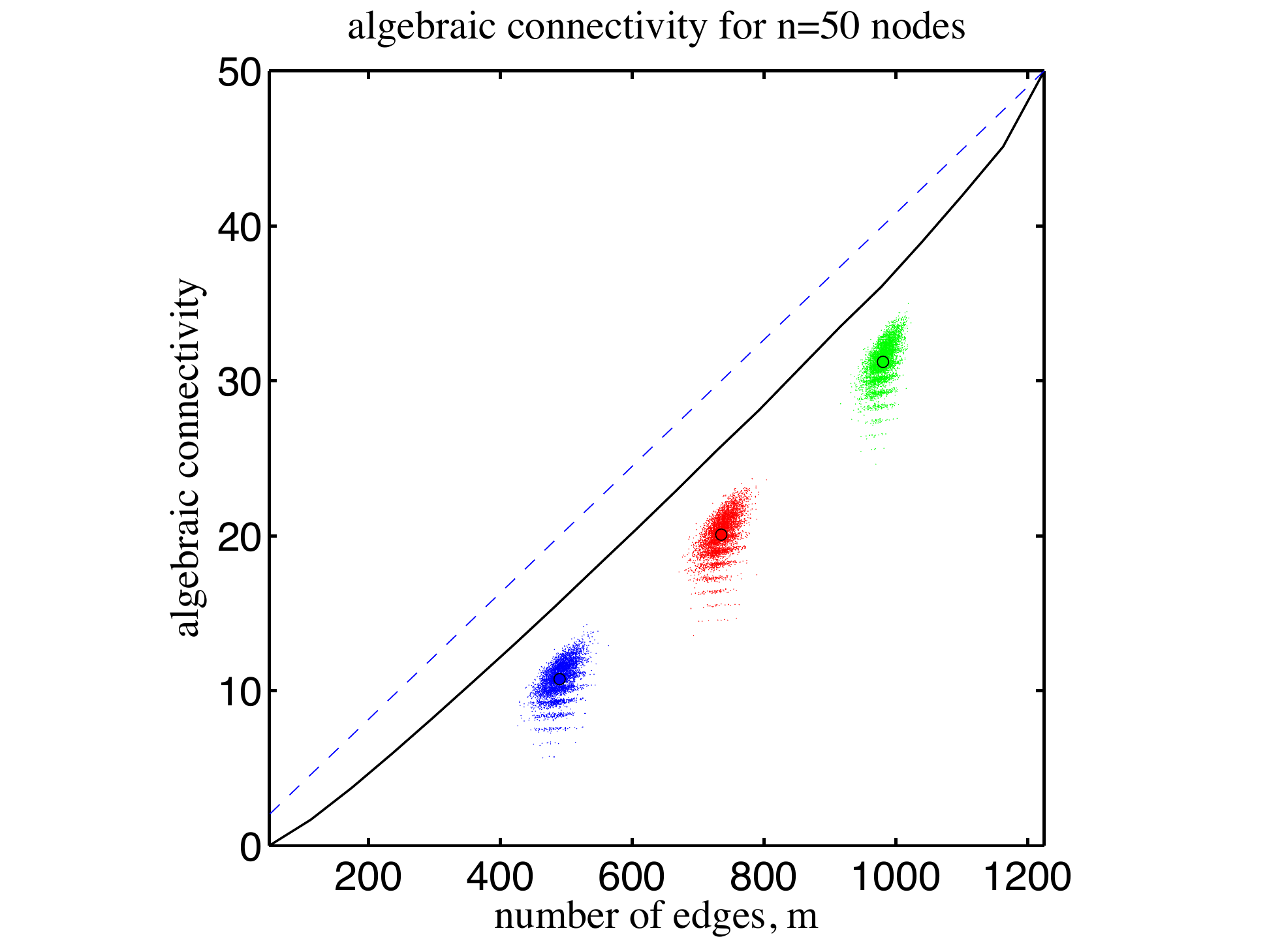} 
\includegraphics[width=.49\textwidth]{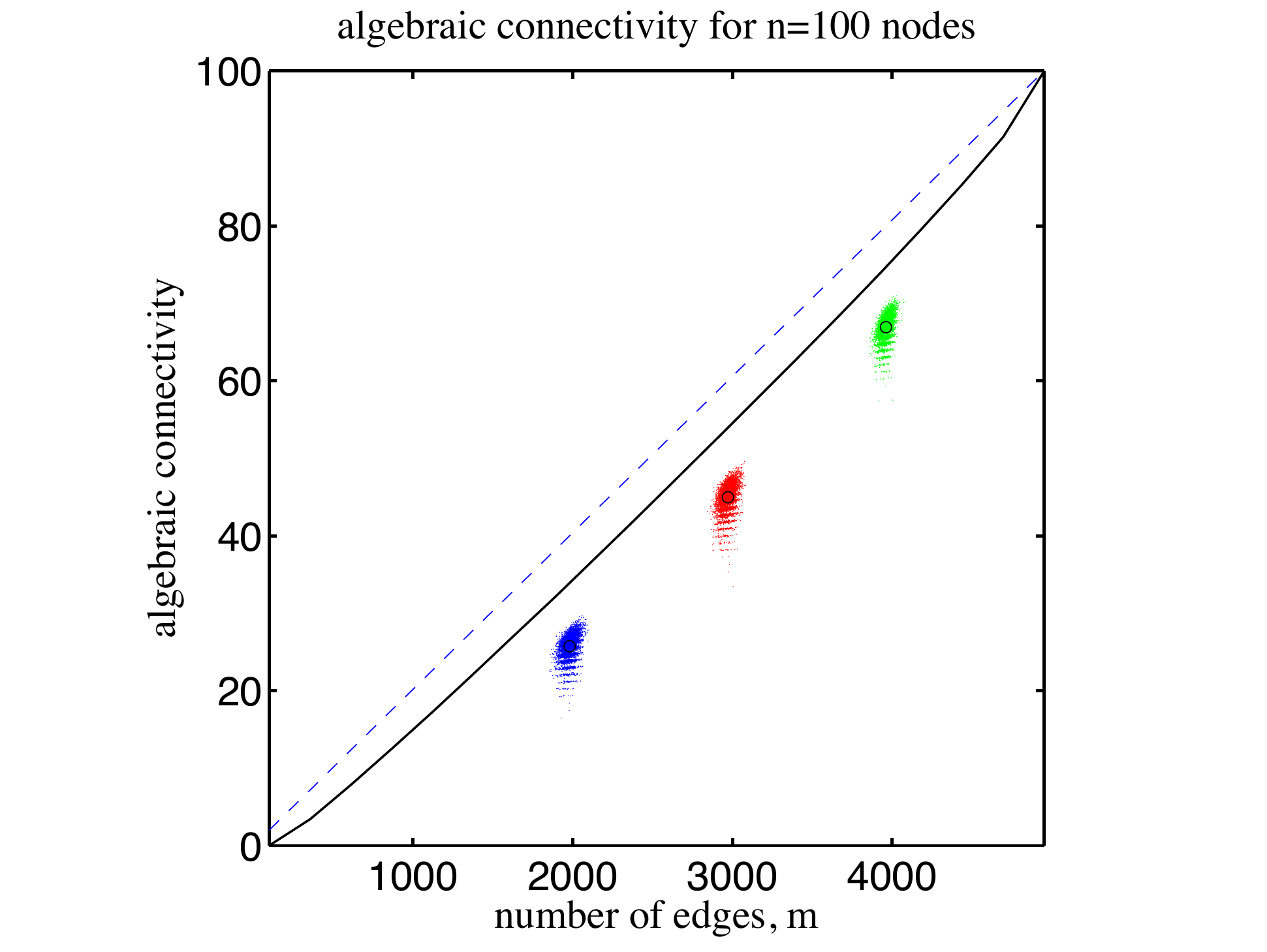} 
\caption{Algebraic connectivity, $\lambda_{2}$ as a function of $m$ for $50$- and $100$-node graphs. 
The dashed blue line  represents the upper bound on $\lambda_{2}$ given in \eqref{eq:lam2ub}. The solid black line represents the nearly-optimal value of $\lambda_{2}$. Finally, for $p=.4$ (blue), $.6$ (red), and $.8$ (green) we give a scatter plot of $(m,\lambda_{2})$ for 5,000 randomly generated Erd\"os-R\'enyi graphs. The mean values obtained are indicated by circles. See \S \ref{sec:compAlgConn}. \label{fig:maxAlgConn}}
\end{center}
\end{figure}

\subsection{Informativeness of the ranking for the  Yahoo! Movie user ratings dataset}
\label{sec:movies}
In this section, we apply the methodology formulated in \S \ref{sec:OSD}, to study the Fisher informativeness of the Yahoo! Movie user rating dataset. We show that the addition of targeted edges can significantly improve the informativeness of the movie rating system.

\paragraph{The dataset} The Yahoo!  Movie user rating dataset consists of a $7,642 \times 11,915$ user-movie matrix where each of the $211,197$ nonzero entries (0.23\% sparsity density) is a 1 to 13 rating \citep{yahooData}.\footnote{34 entries reviewing Yahoo! movie\_id 0 were discarded due to absence in movie content description file.
} Each movie was rated by between 1 and 4,238 users (the average number of reviews per movie is 17.7).
Each user rated between 10 and 1,632 movies (the average number of reviews made by each reviewer is 27.6).
Of the 70,977,655 (movie)  pairs $(i,j)$ where $i>j$, there are  5,742,557 for which a user has given a rating to both movies $i$ and $j$ implying that the pairwise comparisons for the raw dataset are 8.1\% complete. The majority of  movies in the dataset  received relatively few reviews, as reported in Table \ref{tab:MovieFreq}.
\begin{table*}[t!]
{\small
\begin{center}
\begin{tabular}{l | cccccccccc}
\# times movie reviewed  & 1 & 2 & 3 & 4 & 5& 6& 7 & 8 & 9 & $\geq 10$  \\
\hline
occurrences &  4,901 & 1,882 & 897 & 548 & 398 & 316 &237 & 202 & 167 & 2,367
\end{tabular}
\end{center}
\caption{Frequency of reviews for items in the Yahoo!  Movie user rating dataset. See \S \ref{sec:compAlgConn}. }
\label{tab:MovieFreq}}
\end{table*}
The movies which received less than 10 rankings were discarded from the dataset, leaving 2,367  movies, each of which were reviewed by an average of 79.8 users. We then removed 11 users who did not review any of the remaining movies. The remaining 7,631 reviewers reviewed between 1 and 1,220 movies (on average they reviewed 24.8 movies).

\paragraph{Construction of pairwise comparison data from movie-user rating data}
Let $\Sigma$ be the set of Yahoo! users, $V$ be the set of all Yahoo! movies and $r_{i}^{\sigma}$ be the rating given to movie $i\in V$ by user $\sigma\in \Sigma$. For each unordered movie pair $\{i,j\}  \in V^{2}$, we define
$$
\Sigma_{ij} = \{ \sigma \in \Sigma  \text{ who rated both movies } i \text{ and } j \}.
$$
For each movie pair $ \{i,j\} \in V^{2}$, we define $w_{ij}$ to be the number of users who have viewed both movies $i$ and $j$, \ie,  $w_{ij} = | \Sigma_{ij}|$, and $y_{k}$ to be the average difference in movie reviews, written
\begin{equation}
\label{eq:constructPairwise}
y_{ij} = \frac{1}{| \Sigma_{ij} |} \sum_{\sigma \in \Sigma_{ij}}
(r^{\sigma}_{j} - r^{\sigma}_{i}), \,\,
  \text{where } \{i,j\} \in V^{2} \text{ and } i<j.
\end{equation}
Note that the expression in parenthesis is anti-symmetric in the indices $i$ and $j$ and lies in the interval $[-12,12]$. The choice $i<j$ corresponds to the choice in arc direction in \eqref{eq:incidence}. For the Yahoo! Movie user rating dataset, we have 
$n :=|V| = 2,367$, 
$N := \binom{n}{2} = 2,800,161$, 
$m := \|w\|_{0} =  1,884,504$, and 
$M := \|w\|_{1} = 8,322,538$.
Thus, there exists at least one comparison for $m/N=67\%$ of the movie pairs. The mean $w$-weighted degree of each node is given by $2 \cdot M/n =  3,516$. 
A log-histogram of the $w$-weighted degree distribution of the graph representing the pairwise comparison data is given in Fig. \ref{fig:pairCompHist} (top left). 

\begin{figure}[t!]
\begin{center}
  \begin{minipage}[c]{1\textwidth}
  \begin{center}
\includegraphics[width=.4\textwidth,trim=10mm 0mm 8mm 0mm]{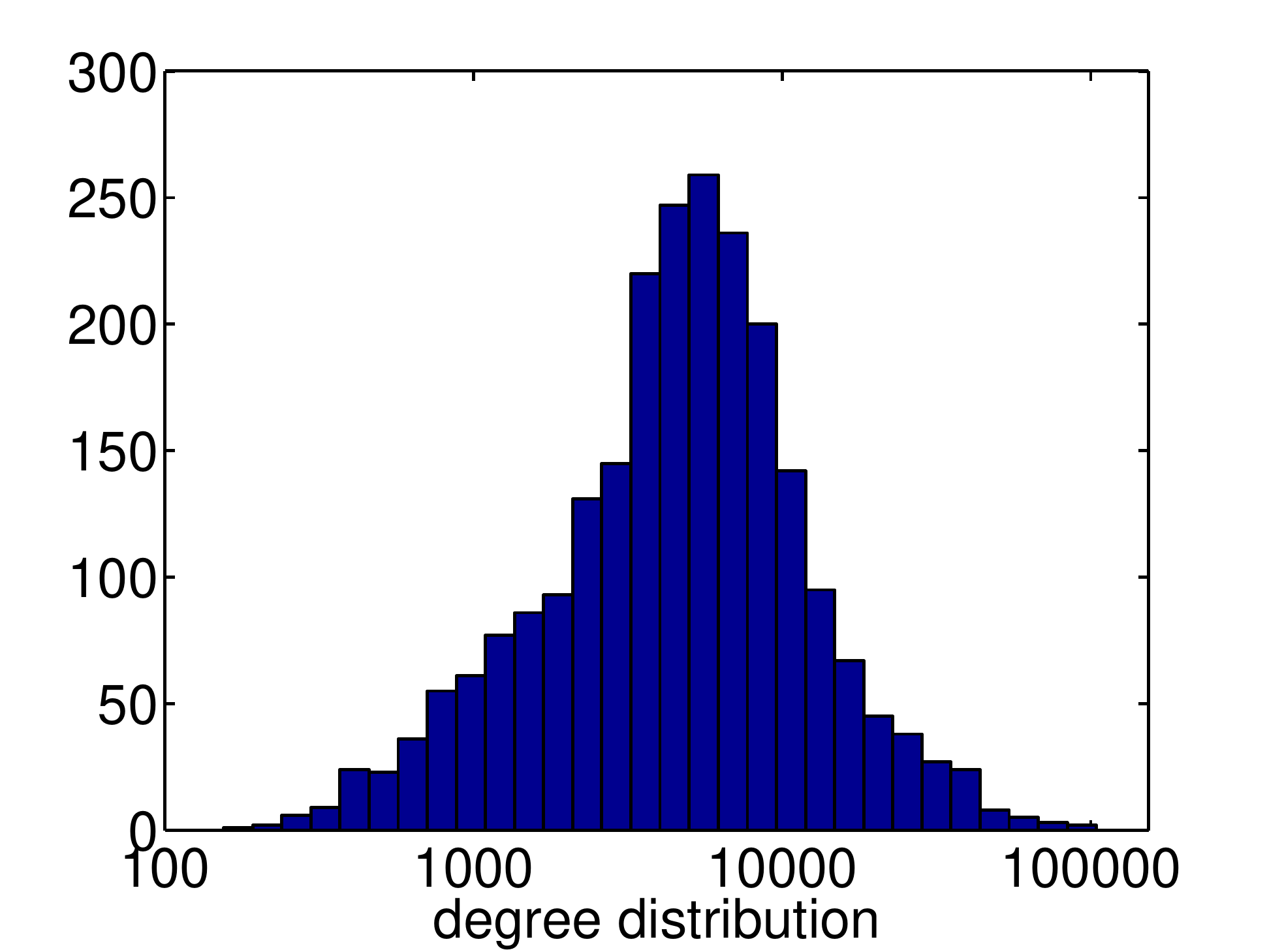} 
\includegraphics[width=.4\textwidth,trim=10mm 0mm 8mm 0mm]{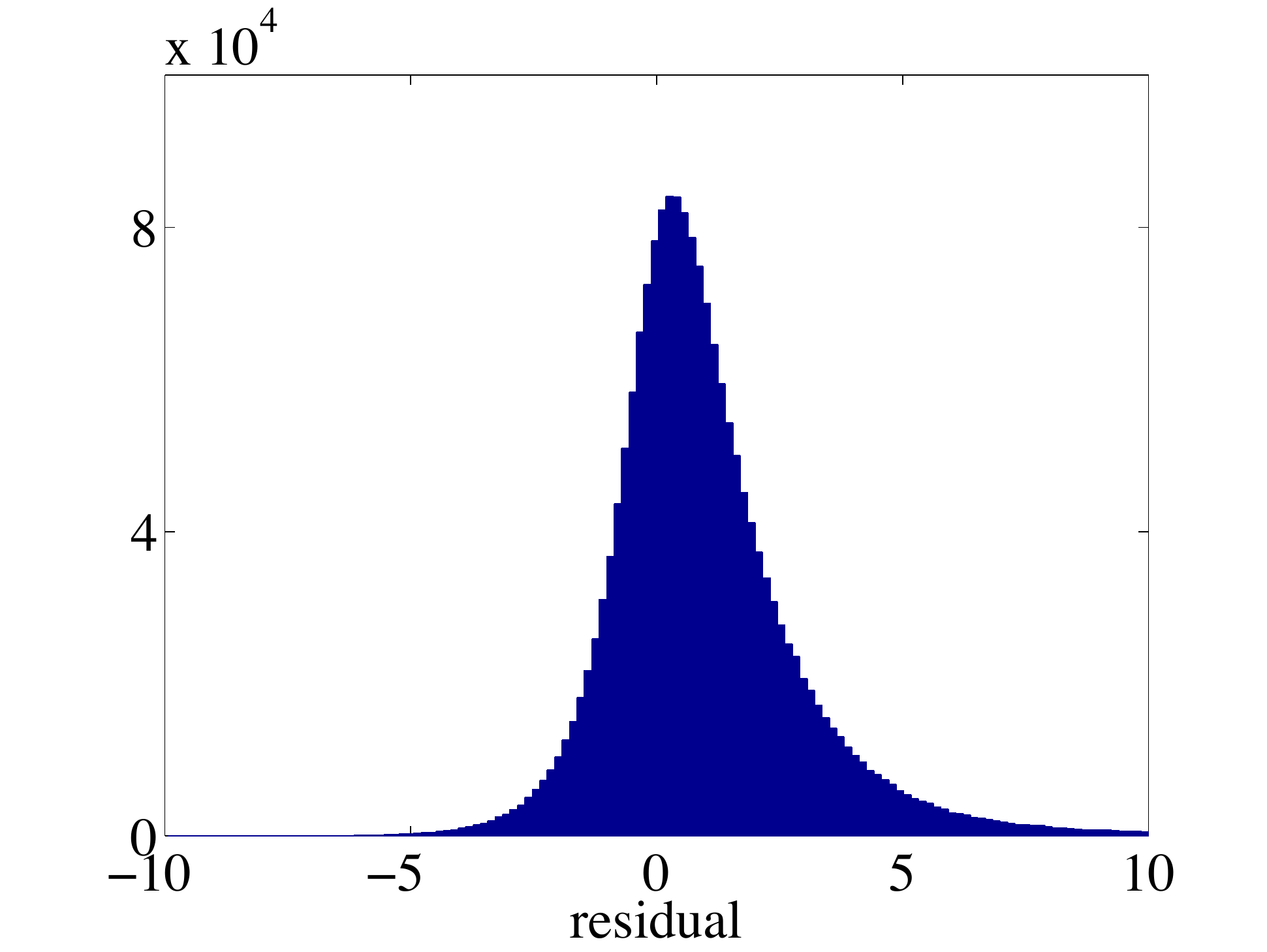} 
\end{center}
\end{minipage}
\medskip

\begin{minipage}[c]{0.45\textwidth}
\begin{center}
{\small
\begin{tabular}{ l  |  l  }
$\hat \phi_{w}$ & Movie Name \\
\hline
 4.46 & It's a Wonderful Life (1946) \\
    4.45 &   Singin' in the Rain (1952) \\
    4.34 &  Rear Window (1954) \\
    4.11 &     24: Season 1 (2002) \\
    3.96 &  The Longest Day (1962) \\
    3.94 &    The Man Who Shot Liberty Valance (1962) \\
    3.92 &     Rebecca (1940) \\
    3.87 &     Friends - The Complete Fourth Season (1997) \\
    3.79 &     Lady and the Tramp (1955) \\
    3.79 &  It Happened One Night (1934) 
\end{tabular}}
\end{center}
\end{minipage}

\end{center}
\caption{{\bf (top left)} A log-histogram of the $w$-weighted degree distribution for the graph representing the Yahoo! movie pairwise comparison data. {\bf (top right)} A histogram of the residual, $y-B\hat{\phi}_w$, where $\hat{\phi}_w$ is the least squares ranking. {\bf (bottom)} Top 10 movies and ranking, $\hat\phi_{w}$. See \S \ref{sec:movies}. 
\label{fig:pairCompHist}}
\end{figure}

\begin{figure}[t!]
\begin{center}
  \begin{minipage}[c]{0.38\textwidth}
  \begin{center}
\includegraphics[width=1\textwidth]{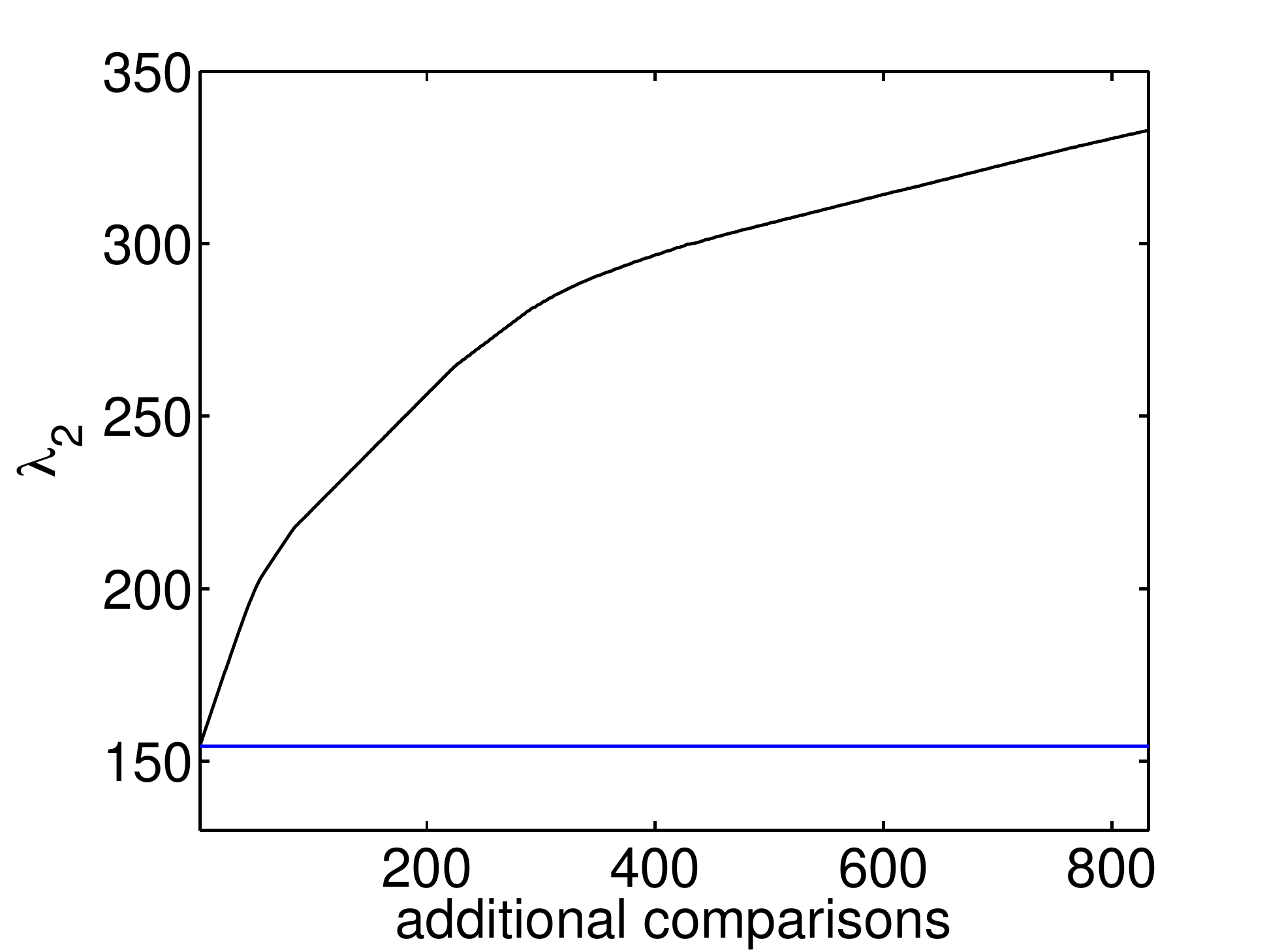} 
\end{center}
\end{minipage}
\begin{minipage}[c]{0.45\textwidth}
\begin{center}
\vspace{.5cm}
{\small
\begin{tabular}{c | c c c }
&\multicolumn{3}{| c}{additional comparisons} \\
method of   &  original& $.005\% \ M$  &  $.01\% \ M$\\
data collection & $\xi = 0$ & $\xi = 416$ & $\xi = 832$   \\
\hline
random  & 154.38 & 154.38 & 154.38 \\
optimal & 154.38 & {\bf 298.44}&{\bf  332.78} \\
\hline
upper bound \eqref{eq:lam2ub} & 7,035 & 7,035 & 7,036
\end{tabular}}
\end{center}
\end{minipage}
\end{center}
\caption{{\bf (left)} The informativeness of the ranking, $\lambda_{2}(w)$, as a small number ($.01\% \cdot M$) of targeted pairwise comparisons (black) and  randomly selected pairwise comparisons (blue) are added. 
{\bf (right)} The value of $\lambda_{2}(w)$ for this augmented dataset and the upper bound on $\lambda_{2}$ given in \eqref{eq:lam2ub}. The change in informativeness for randomly added data is unappreciable compared to a 2.2 fold increase for targeted data. 
See \S \ref{sec:movies}. 
\label{fig:addComps}}
\end{figure}

\paragraph{The least squares ranking}
A ranking is obtained by solving the least squares problem, \eqref{eq:R}, using Matlab's \texttt{lsqr} function. The top ten movies found are given in Figure \ref{fig:pairCompHist}.  The relative residual norm of the least squares estimator, $\hat \phi_{w}$, is
$
\frac{\|B \hat \phi_{w} - y \|_{w}} { \| y \|_{w} } =  0.53.
$
In Fig. \ref{fig:pairCompHist} (top right), we plot a histogram of the residual,  $y-B\hat{\phi}_w$. For this pairwise comparison dataset, the normality assumption in Prop. \ref{prop:minLam2} is reasonable. 

The informativeness of the ranking is $ \lambda_{2}(w) = [\text{Var}(\hat \phi_{w})]^{-1} = 154.38$. This value is  small compared to the upper bound given in \eqref{eq:lam2ub}, $\lambda_{2}(w) \leq  \frac{2 M}{n-1} = 7,036$. 
We next demonstrate that the Fisher  information can be significantly improved by the addition of a small number of targeted pairwise comparisons.

\paragraph{Targeted data collection} We apply the optimal experimental design approach developed  in \S \ref{sec:OSD} to improve the Fisher information of the least squares ranking.  To approximate the solution of \eqref{eq:minLam2}, we use the greedy algorithm described in Algorithm \ref{alg:Greedy}. The second eigenpair of the graph Laplacian  is computed using Matlab's \texttt{eigs} function, initialized using the eigenvector from the previous iteration.   We choose a very modest value of pairwise comparison edges to add, $\xi = .01\% \cdot M = 832$ edges. The results are given in Fig. \ref{fig:addComps}. The addition of the targeted pairwise comparisons leads to an increase in the second eigenvalue of the $w$-weighted graph Laplacian by a factor of $2.2$. The maximum increase for the addition of a single pairwise comparison is $\approx 1$, less than the upper bound given in  \eqref{eq:maxIncrease}. We observe in Fig. \ref{fig:addComps}, that the rate of information increase slows as more pairwise comparisons are added. For a comparison, we also consider the addition of randomly chosen movie pairs. For this modest value of additional edges, $\xi$, the effect of the informativeness of the ranking is unappreciable.

\begin{figure*}[tf!]
\vspace{1.5cm}
\centering
\begin{minipage}[t]{.75\textwidth}
   \begin{center}
   		\includegraphics[width=.96\textwidth]{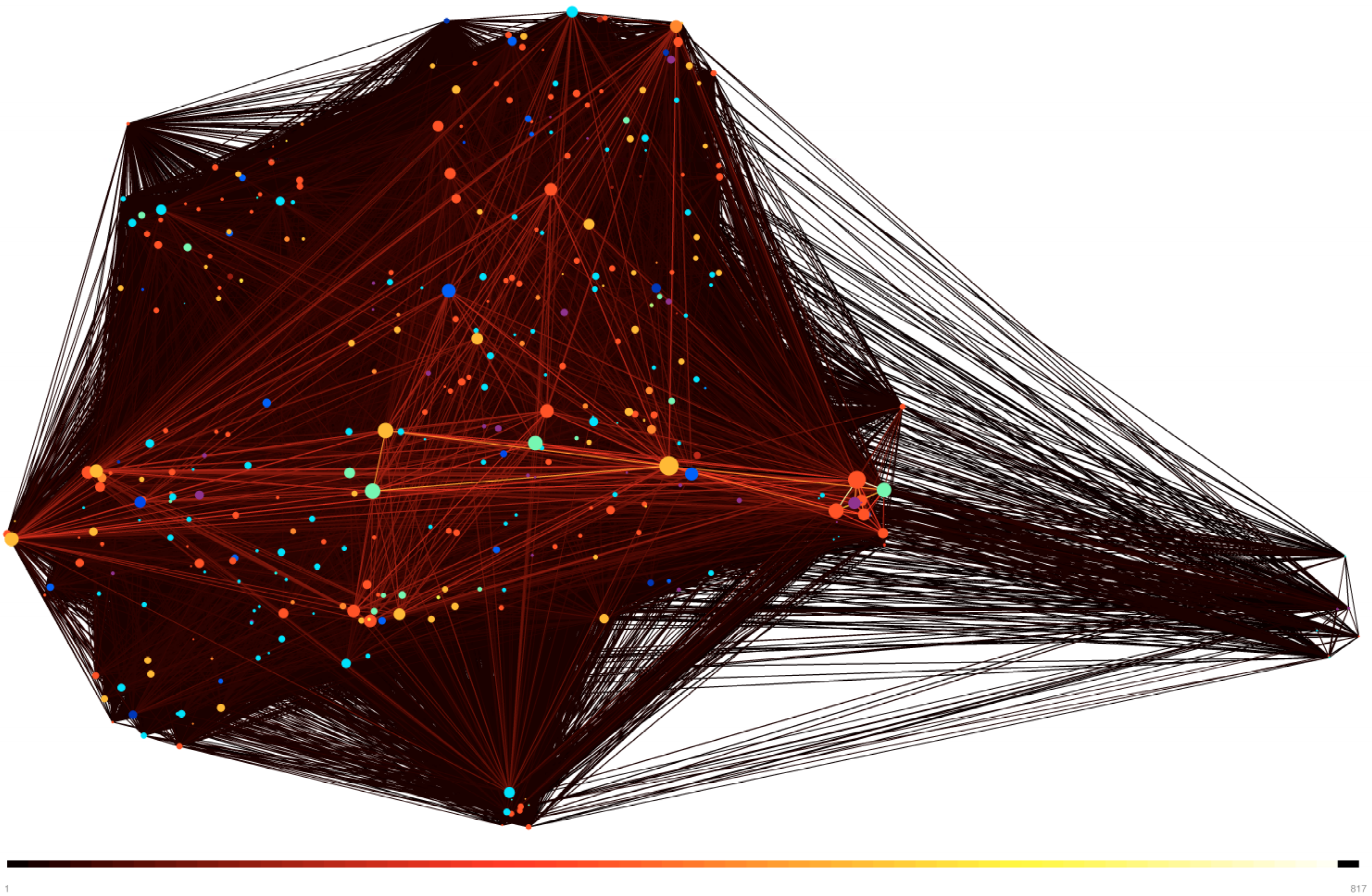} \\
		\bigskip		
   		\includegraphics[width=.96\textwidth]{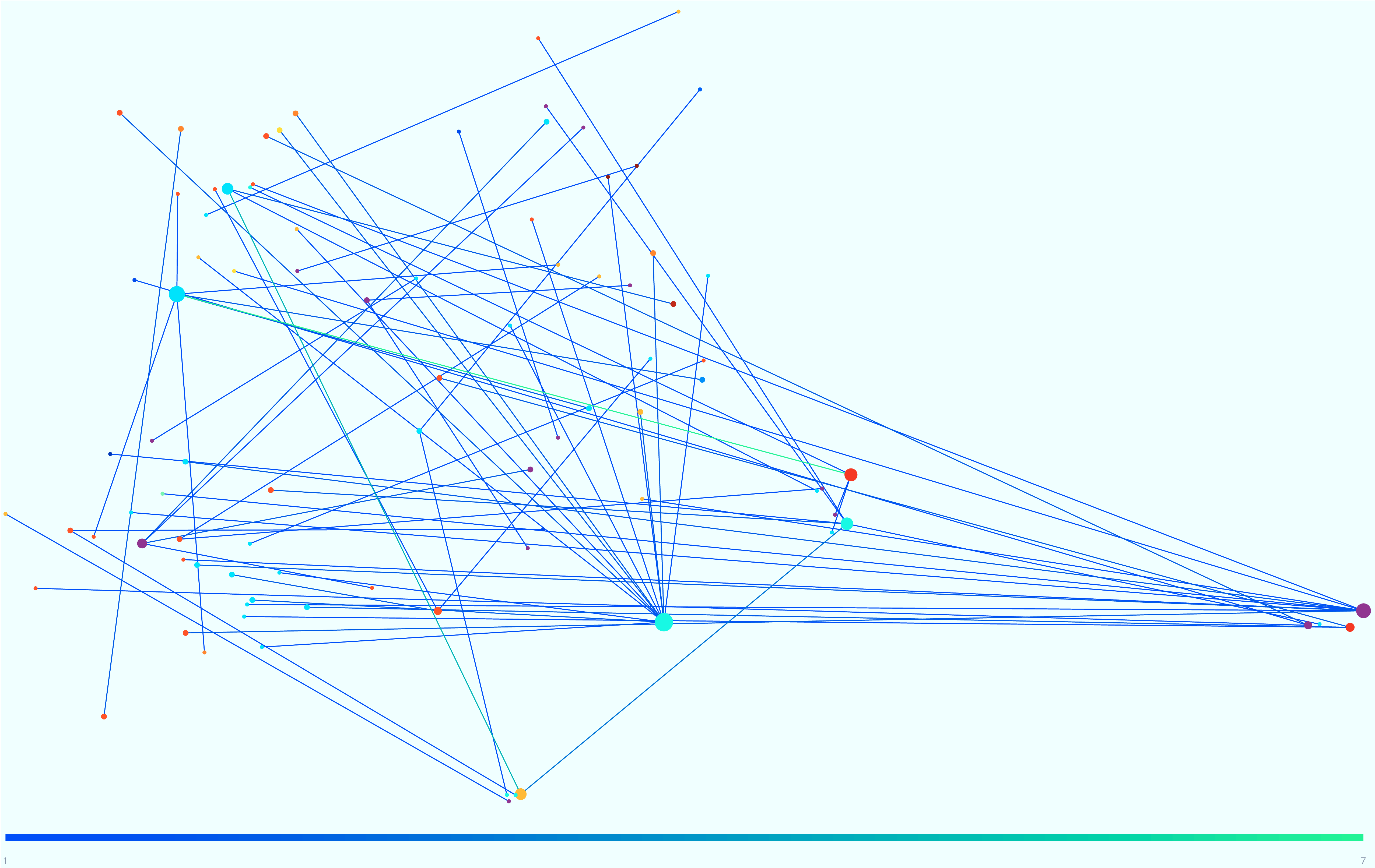} \\
   \end{center}
\end{minipage}%
\begin{minipage}[t]{.25\textwidth}
   \vspace{-8em}
   \begin{center}
   		\hspace{4em}
    		\includegraphics[width=\textwidth]{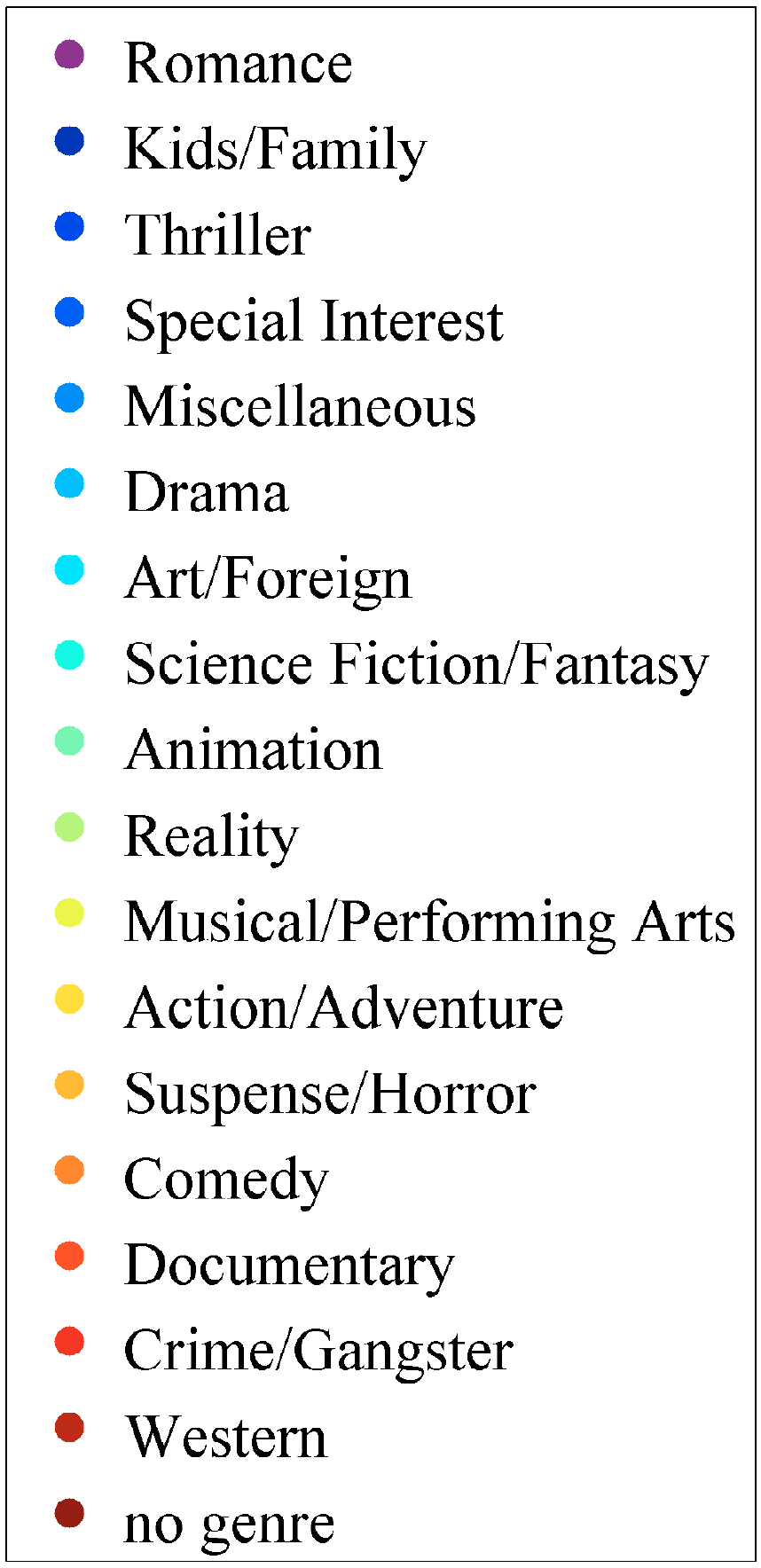}
	\end{center}
\end{minipage}\\%
\caption{{\bf Yahoo! Movie ratings and targeted data collection}. 
{\bf (top)} A (15\% randomly chosen) subset of the pairwise comparison graph for the Yahoo! user-movie database. Nodes represent movies, node size reflects weighted degree (\ie, number of comparisons with other movies), and node color indicates genre (see legend). Edges represent weighted pairwise comparisons colored by edge weights (\ie, number of comparisons). 
{\bf (bottom)} Pairwise comparisons targeted for collection to improve the informativeness of the least squares ranking. Targeted comparisons are colored by weight (multiplicity).  See \S \ref{sec:movies}.
\label{fig:clusterYahoo}}
\end{figure*}

Finally, we use graph visualization via spectral clustering to illustrate the pairwise comparison and targeted data. In Fig. \ref{fig:clusterYahoo}(top) we plot the  pairwise movie comparisons obtained from the Yahoo! user-movie database. 
In Fig. \ref{fig:clusterYahoo}(bottom) we plot the proposed pairwise comparisons, targeted to improve the informativeness of the rating system. 
To enhance the readability of the graph representation, we plot only $15\%$ randomly selected nodes ($356$ of $n=2367$) and the interconnecting edges ($45,327$ of $m=1,884,504$). 
Figure  \ref{fig:clusterYahoo}(top) was generated as follows. 
First normalized spectral clustering (based on $k$-means) was used to detect clusters of movies. 
Next, the Fruchterman-Reingold algorithm was used to generate reasonable positions for the  movie clusters and the Kamada-Kawai algorithm was used to place movies within the clusters \citep{Traud:2009}. 
The node placement was obtained using the full dataset. Finally, the weighted graphs were plotted using \texttt{wgPlot} \citep{wgPlot}. Figure  \ref{fig:clusterYahoo}(bottom) was then generated using the same node placements as in Figure  \ref{fig:clusterYahoo}(top).  

A comparison of the top and bottom panels of Fig. \ref{fig:clusterYahoo} shows that the primary improvement to informativeness arises from the addition of edges which connects two relatively weakly connected components of the graph. With 4 exceptions, each targeted movie pair is only incremented once; it isn't generally advantageous to add an edge  multiple times.

\subsection{2011-12 NCAA Division I football schedule} \label{sec:football}
Recall from  \S \ref{sec:intro} that in sports the optimal pairwise data collection problem in equivalent to designing the schedule. 
In this section, we study the 2011-12 NCAA Division 1 football schedule, downloaded from Massey Ratings.\footnote{\url{http://masseyratings.com/scores.php?t=11590&s=107811&all=1&mode=2&format=0}} 
The NCAA Division 1 Football League is divided into the Football Bowl Subdivision (FBS) and Football Championship Subdivision (FCS).\footnote{These were formally known as Division 1-A and 1-AA respectively.} 
The FBS is further decomposed into 12 conferences and the FCS into 15. 
Of the 246 teams in Division 1, 120 belong to FBS and 126 belong to FCS. 
Lafayette College is a member of FBS, however every opponent of Lafayette during the 2011-12 season was a member of the FCS. For our purposes, it is more convenient to reclassify Lafayette as a member of FCS and thus, in what follows, FBS has 119 teams and FCS has 127. 
There were $m=1430$ games among the Division 1 teams and $m=693$ games among the FBS teams. 

For static schedules, an important  statistic is the the ratio of the total number of games played to the total number of teams. For example, in Major League Baseball (MLB), there are 30 teams, divided into two leagues: the American League (14 teams) and the National League (16 teams). During the regular season, each team plays approximately 160 games, primarily against teams within the same division. Thus, within each league, teams play an average of $160/15 \approx 10$ times. With so many games and equal strength of schedule among teams, it is intuitive that the scheduling has little effect on the rankings. And, in fact, MLB simply uses win/loss percentages for ranking purposes. In the NCAA football considered here however, there are 120 teams in the NCAA Football Bowl Subdivision (FBS) and each team plays approximately 6 games per year within FBS. Thus each team only plays roughly $5\%$ of the other teams. There are several rankings for NCAA football which are generated either mathematically or by expert opinion and then aggregated to determine official rankings and select teams to compete in the prestigious end-of-season ``bowl games''.  The fact that these rankings generally disagree and that none of them is more reliable than the others suggest that none of them are very informative. It is this situation, where there  are relatively few games compared to the number of teams, that the schedule has a large effect on the rankings.

\begin{figure}[hf!]
\begin{center}
\includegraphics[width=.92\textwidth]{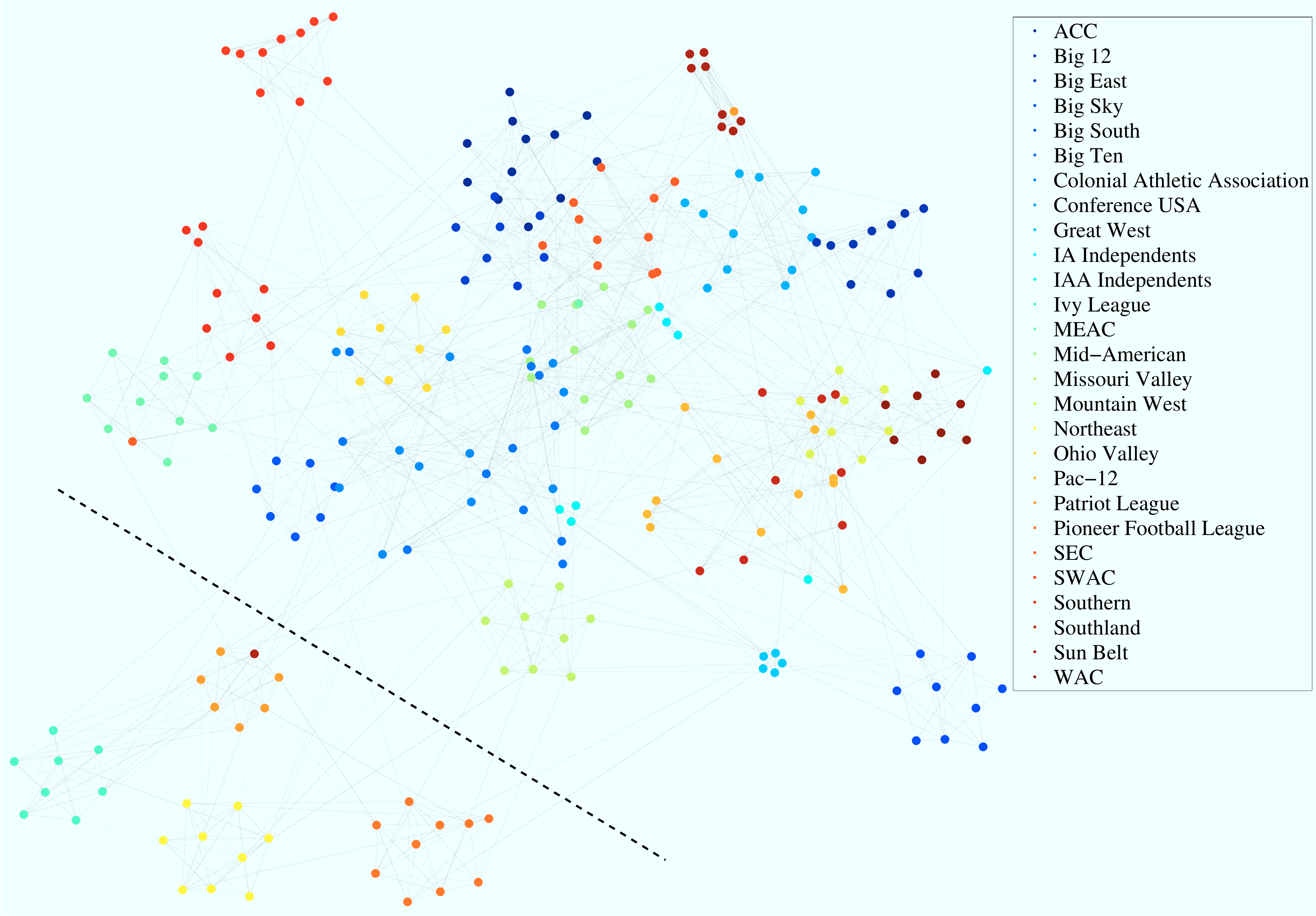}\\
\includegraphics[width=.52\textwidth]{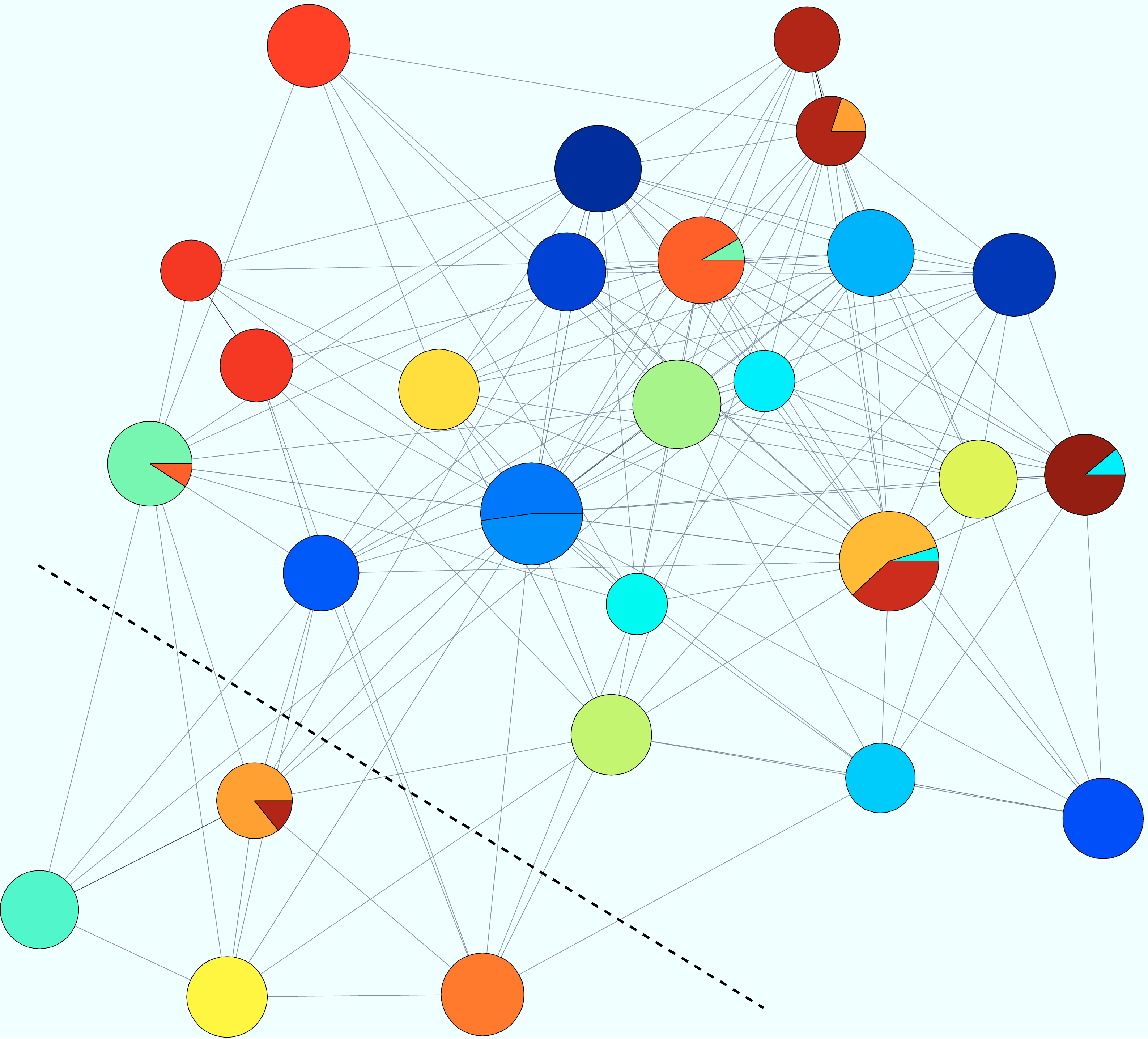}
\caption{\textbf{2011-12 NCAA Division 1 (FBS and FCS) football schedule}. Graph representation of schedule via spectral clustering by games, \textit{top:} vertices represent teams, edges represent games, coloring indicates conference membership. \textit{bottom:} community detection of teams (represented using pie-graphs) reveals  that teams primarily play within their own conference. The dashed lines indicate an edge cut which is discussed in the text. See \S \ref{sec:football}.
\label{fig:cluster}}
\end{center}
\end{figure}

\paragraph{Data visualization via spectral clustering} \label{sec:FBdatavisualization}
We use the  data visualization method described below to demonstrate that NCAA Division 1 teams primarily play against other teams within their own conference. We then  show that this clustering of  teams by conference results in the graph having poor algebraic connectivity.

We first use normalized spectral clustering to detect communities within the teams \citep{Shi:2000}. This, in turn, relies on the $k$-means algorithm where $k$ is the desired number of communities (27 for Division 1 and 12 for Division 1 FBS). Then, using the Matlab toolbox described in \citep{Traud:2009}, the Fruchterman-Reingold algorithm finds an optimal placement of the communities and the Kamada-Kawai algorithm is used for the placement of nodes within each community. 
The mean within-cluster sum of point-to-centroid distances for the $k$-means clustering obtained for the Divsion 1 and Division 1 FBS data is $0.147$ and $0.133$  respectively. 

In Figures \ref{fig:cluster} and \ref{fig:cluster1A}, we plot the 2011-12 NCAA Division 1 and Division 1 FBS football schedules respectively. In \ref{fig:cluster}(top) and \ref{fig:cluster1A}(top), the vertices represent teams, the edges represent games, and each vertex (team) is colored by conference membership. In \ref{fig:cluster}(bottom) and \ref{fig:cluster1A}(bottom), the vertices represent the spectrally clustered communities and the edges represent the community interactions. 
 We observe from Figures \ref{fig:cluster} and \ref{fig:cluster1A} that the teams primarily play within their own conference, which has implications discussed  below.

We next compare the value of the algebraic connectivity for these schedules with schedules from Erd\"os-R\'enyi random graphs and  proposed nearly-optimal schedules.

\begin{figure}[hf!]
\begin{center}
\includegraphics[width=.92\textwidth]{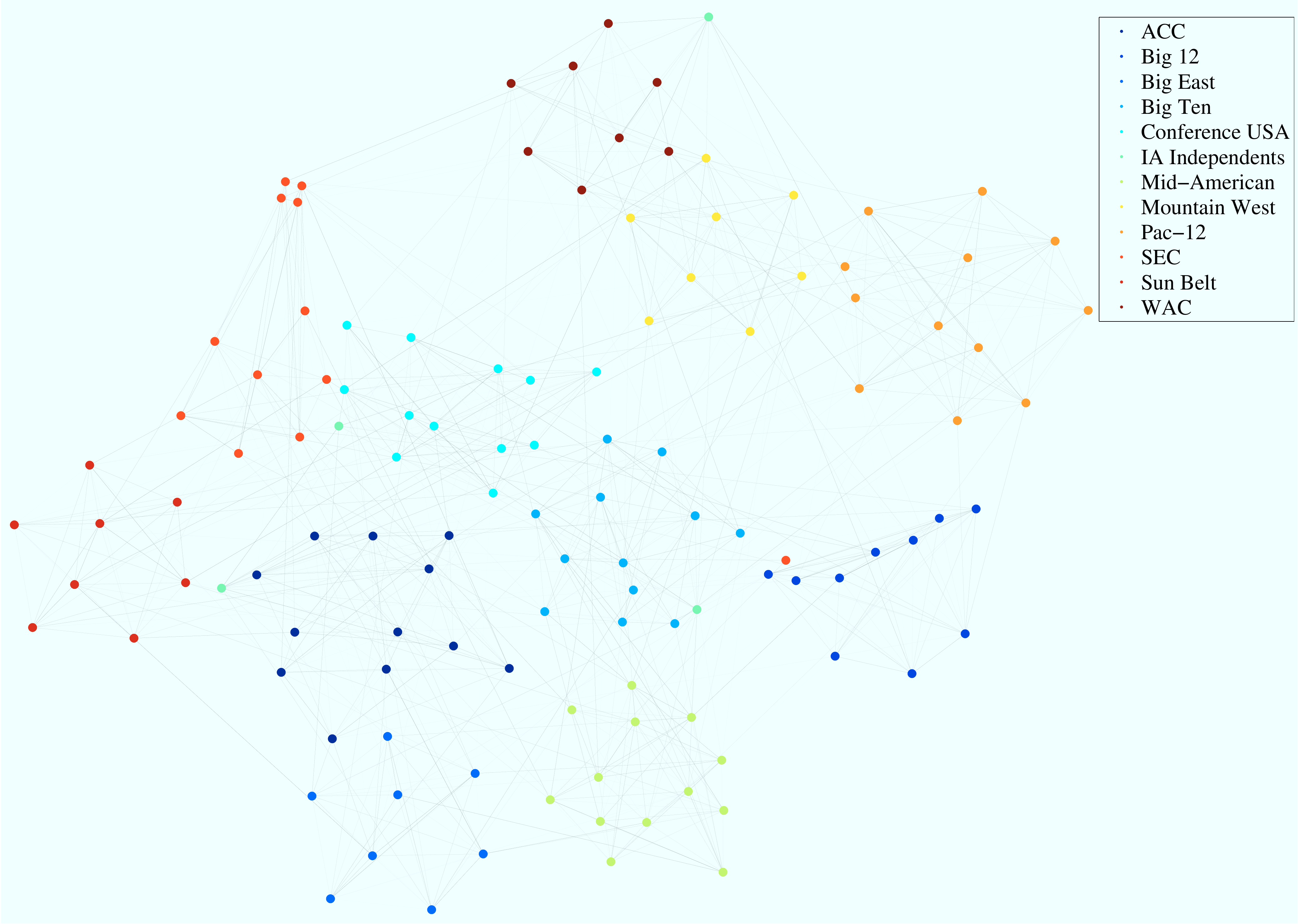}\\
\includegraphics[width=.52\textwidth]{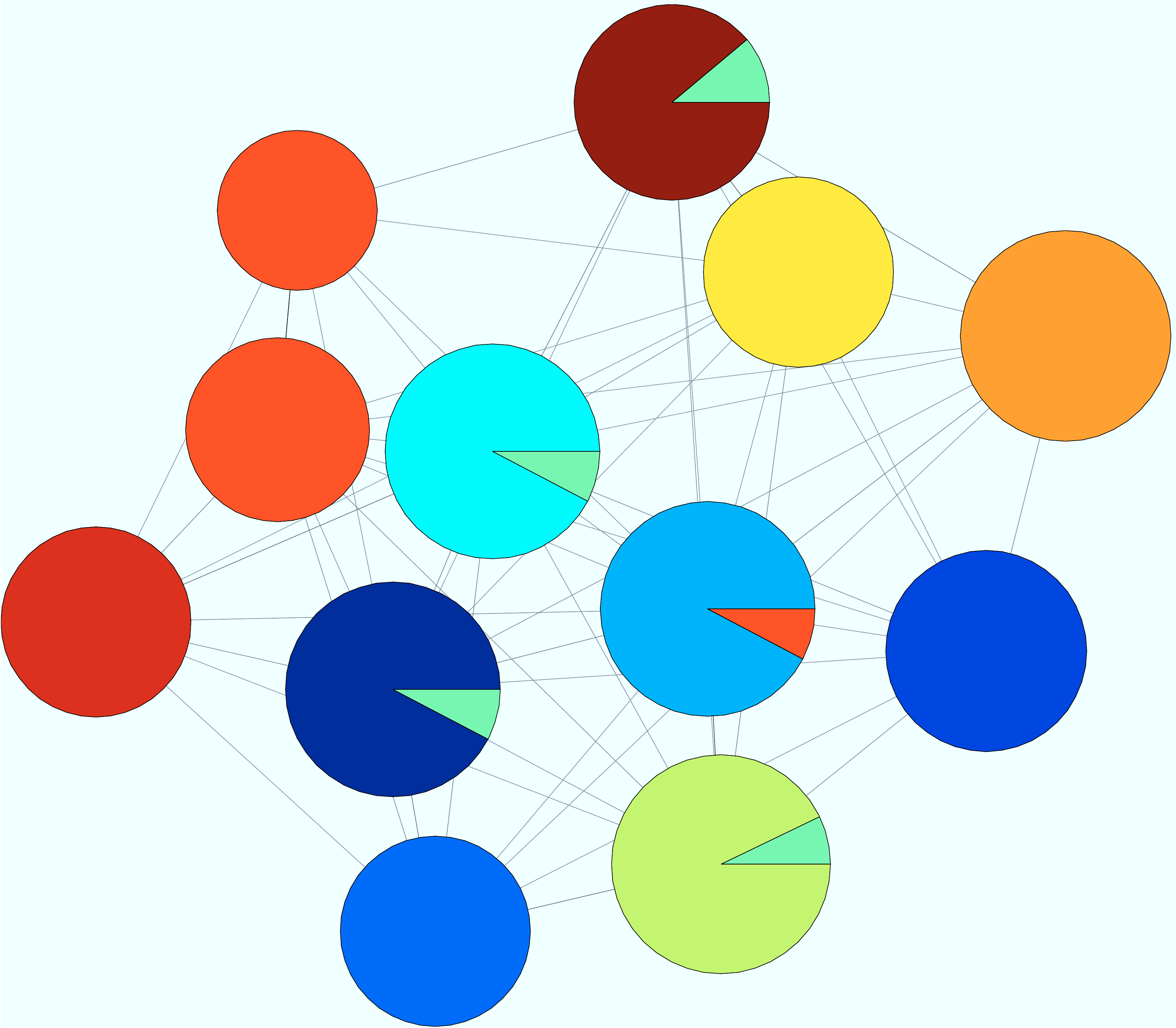}
\caption{\textbf{2011-12 NCAA Division 1 (only FBS) football schedule}. Graph representation of schedule via spectral clustering by games, \textit{top:} vertices represent teams, edges represent games, coloring indicates conference membership. \textit{bottom:} community detection of teams (represented using pie-graphs) reveals that teams primarily play within their own conference. 
See \S \ref{sec:football}.
\label{fig:cluster1A}}
\end{center}
\end{figure}

\paragraph{Comparison of NCAA Division 1, Erd\"os-R\'enyi random,  and nearly-optimal schedules} In the introduction, we noted that there are several common scalar measures of $\text{Var}(\hat{\phi}_{w})$, three of which are given in \eqref{eq:otherOptCond}. In this section, we compare these various measures for the NCAA Division 1, Erd\"os-R\'enyi random,  and nearly-optimal schedules. 

More concretely, let $w$ be a given schedule (defining a graph on $n$ vertices) and define the graph Laplacian: $\Delta_{w} := B^{t} [\text{diag}(w)] B$. Define the following three functions of $w$:
\begin{subequations}
\label{eq:3ObjFun}
\begin{align}
\label{eq:3ObjFuna}
J_{E}(w) &:= \lambda_{2}(w) \\
\label{eq:3ObjFunb}
J_{A}(w) &:= \left[ \frac{1}{n}\text{tr}(\Delta_{w}^{\dag}) \right]^{-1}= \left[ \frac{1}{n} \sum_{i\geq2} \frac{1}{\lambda_{i}(w)} \right]^{-1} \\
\label{eq:3ObjFunc}
J_{D}(w) &:= \log [\text{det}(\Delta_{w}) ]^{\frac{1}{n}} = \frac{1}{n} \sum_{i\geq 2} \log[ \lambda_{i}(w)] 
\end{align}
\end{subequations}
To obtain quantities more comparable to those for the E-optimality condition, for $J_A(w)$ we have used the harmonic mean of the eigenvalues rather than the negative of the inverses as in \eqref{eq:FIrankb} and for $J_D(w)$, we have taken the logarithm of the determinant in \eqref{eq:FIrankc}.  An interpretation of the three quantities defined in \eqref{eq:3ObjFun} in terms of the graph is given in Remark \ref{rem:OtherOptCond}. 

For the Division 1 and Division 1 FBS schedules, we compute the various measures of the quality of schedule given in \eqref{eq:3ObjFun} and record them in Table \ref{tab:EADvals}. 
We also plot $J_{E}(w)$ given in \eqref{eq:3ObjFuna} in Fig. \ref{fig:maxAlgConn124} by a red diamond. 
We next discuss schedules for which we compare the Division 1 and Division 1 FBS schedules in Table \ref{tab:EADvals} and Fig.  \ref{fig:maxAlgConn124}.

The expected number of edges for a $G(n,p)$ Erd\"os-R\'enyi random graph is $p N$ where $N :=\binom{n}{2}$. To compare to the football schedules, we take $p = m/N$ and consider the family of random graphs, $G(n,m/N)$. 
For $n=119$  and $m = 693$, we choose $p=m/N\approx0.0987$ which is approximately $2.5$ times  the threshold for connectivity,  $p_{c} = \log(n)/n \approx 0.0402$. 
For $n=246$  and $m = 1430$, we choose $p=m/N\approx0.0475$ which is approximately $2.1$ times  the threshold for connectivity,  $p_{c} = \log(n)/n \approx 0.0224$. 
In Table \ref{tab:EADvals}, we tabulate the expected values of the three quantities given in \eqref{eq:3ObjFun} for $G(n,m/N)$ graphs, obtained by averaging over a sample size of 1000. 
Similar to \S \ref{sec:compAlgConn}, in Fig.  \ref{fig:maxAlgConn124}, we give a scatter plot of $(m,\lambda_{2})$ for $G(n,m/N)$ graphs and indicate the mean values with a blue circle. 

As in \S \ref{sec:compAlgConn} and \S \ref{sec:movies}, we again use the greedy algorithm described in \S \ref{sec:compGraphs}  (see Algorithm \ref{alg:Greedy})  to compute graphs with $n$ nodes and $m$ edges which nearly-maximize $J_{E} = \lambda_{2}$. We then evaluate all three quantities given in \eqref{eq:3ObjFun} for these graphs and tabulate these values in 
 Table \ref{tab:EADvals}.
 The solid black line in Fig. \ref{fig:maxAlgConn124} is the best value of $J_{E} = \lambda_{2}$ obtained. Finally, the dashed blue line in Fig. \ref{fig:maxAlgConn124} represents the upper bound on $\lambda_{2}$ given in \eqref{eq:lam2ub}. 

We observe in Fig.  \ref{fig:maxAlgConn124} and Table \ref{tab:EADvals}  that the schedules which nearly-maximize $J_{E}(w) = \lambda_{2}$ have significantly larger values of $J_{E}$ than the NCAA Division 1 and Division 1 FBS schedules. In fact, the NCAA schedules have worse values than schedules associated with Erd\"os-R\'enyi random graphs of the same size. 
Furthermore, we show in Table \ref{tab:EADvals} that schedules which maximize $J_{E}$ also have larger  values of $J_{A}$ and $J_{D}$. That is, the schedules which are good in the sense of E-optimality are also good schedules in the sense of D- and E-optimality as defined in \eqref{eq:otherOptCond}.

The reason for the relatively poor value of $J_{E}(w) = \lambda_{2}$ for the NCAA Division 1 and Division 1 FBS schedules can be understood from Figures \ref{fig:cluster} and \ref{fig:cluster1A}.
Figures \ref{fig:cluster} and \ref{fig:cluster1A} reveal that teams primarily play within their own conference.  This results in a small edge cut between a conference (or set of conferences) and its vertex complement, which, by \eqref{eq:edgeCut}, implies a small algebraic connectivity. For example,  the edge cut indicated by the dashed line in Fig. \ref{fig:cluster} (entire  NCAA Division 1 schedule) results in an upper bound on the algebraic connectivity of 1.297. The edge cut obtained by considering the set consisting of teams in the SWAC conference yields an upper bound equal to 1.043. Both of these bounds are already less than the expected value of $\lambda_{2}$ for Erd\"os-R\'enyi random graphs of comparable size (compare with the top part of the first column in Table \ref{tab:EADvals}). To summarize, the NCAA primarily schedules games among teams within the same conferences and this reduces  the informativeness of the rankings. 

The schedule design methodology advocated in Eq. \eqref{eq:minLam2} is  flexible in the following two senses: (i) The optimal schedules contain symmetry with respect to permutations in the seeding of the teams. This problem has been studied previously for tournaments; see  the discussion in \S \ref{sec:relwork}. (ii) The optimal schedule is \emph{not} time dependent and thus the scheduling of future games does \emph{not} depend on past game performances, \ie, the schedule is completely known before the season begins and the games may be played in \emph{any} order. These properties  can be  exploited in the further  design of the schedule.    

\begin{figure}[t!]
\begin{center}
\includegraphics[width=.47\textwidth]{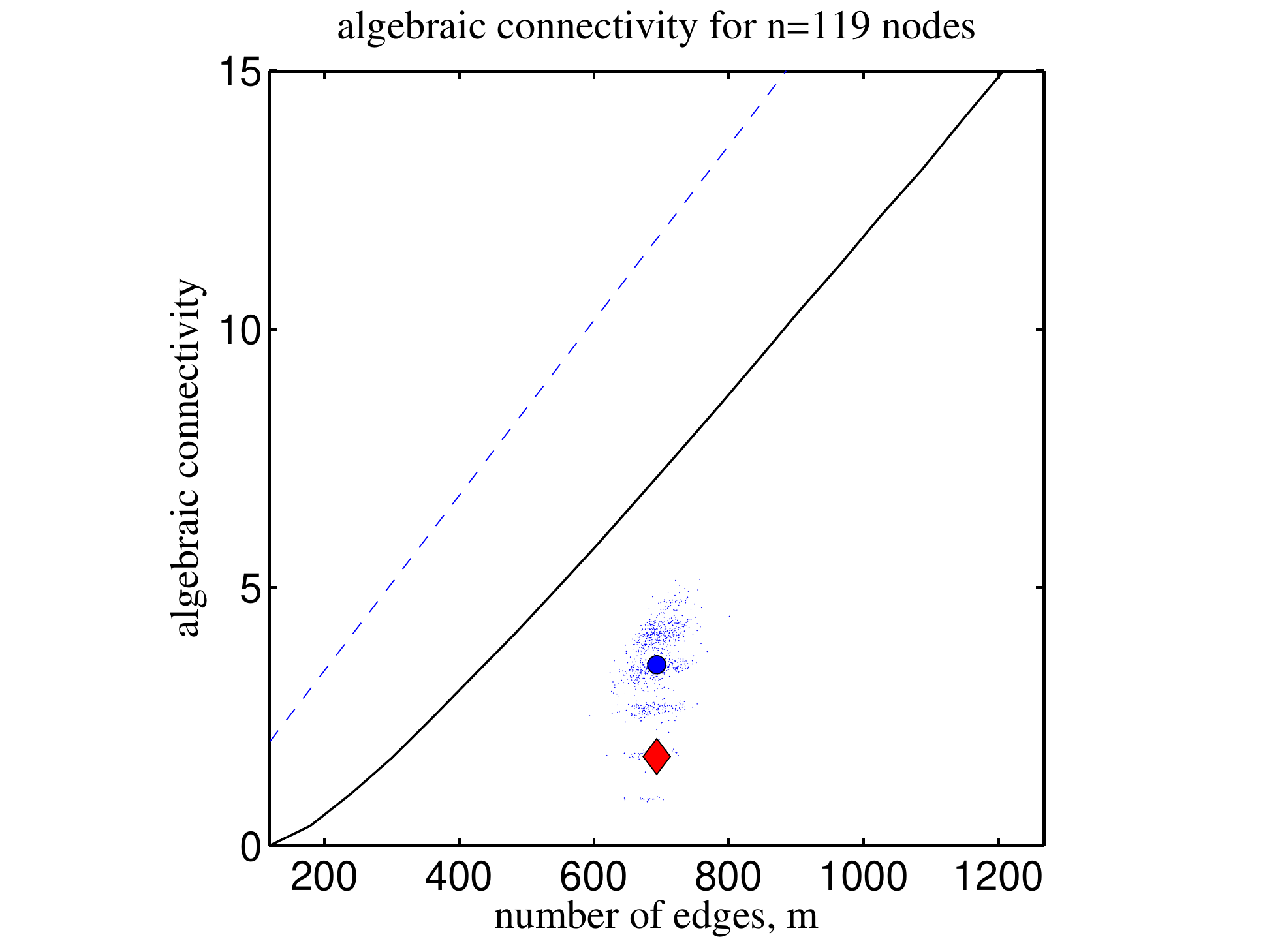}
\includegraphics[width=.47\textwidth]{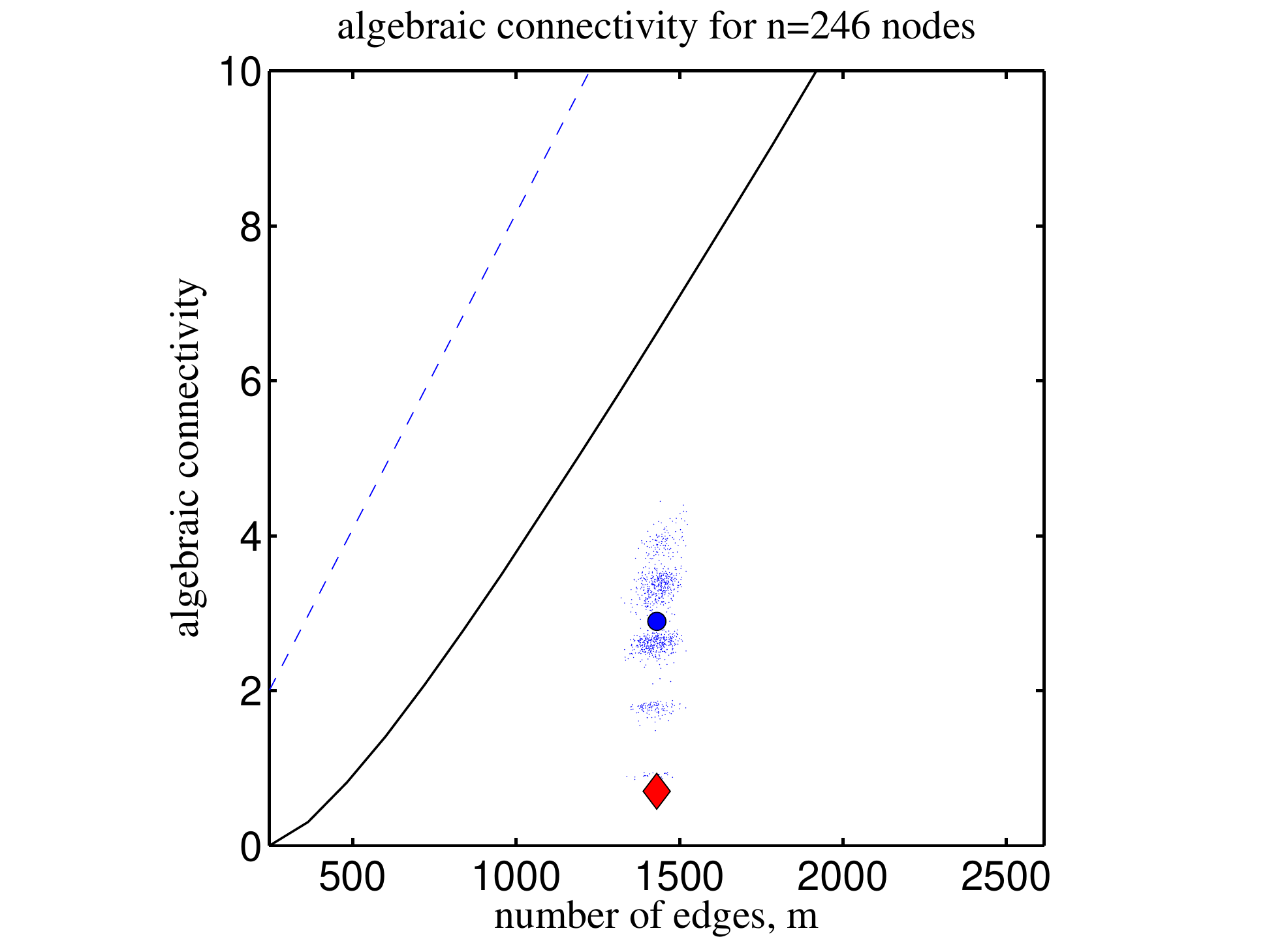}
\caption{A comparison of  $J_{E}(w) = \lambda_{2}$ defined in \eqref{eq:3ObjFuna} for the Division 1 and Division 1 FBS schedules, Erd\"os-R\'enyi random schedules, and schedules which nearly-maximize $ \lambda_{2}$.
The red diamonds represents the 2011 NCAA Division 1 (right) and Division 1 FBS (left) football schedule. 
The solid black lines represent the nearly-optimal values of $\lambda_{2}$ obtained for $n=119$ (left) and $n=246$ (right).
The dashed blue lines  represent the upper bound on $\lambda_{2}$ given in \eqref{eq:lam2ub}.
The blue dots represent a scatter plot of $(m,\lambda_{2})$ for 1,000 randomly generated Erd\"os-R\'enyi graphs, $G(n,m/N)$. The mean values  are indicated by blue circles. 
See \S \ref{sec:football}. 
\label{fig:maxAlgConn124}}
\end{center}
\end{figure}

\begin{table}[hf!]
\begin{center}
\begin{tabular}{l | c c c }
& $J_{E}(w)$ in \eqref{eq:3ObjFuna}  & $J_{A}(w)$  in \eqref{eq:3ObjFunb}  & $J_{D}(w)$  in \eqref{eq:3ObjFunc}  \\
\hline
Div. 1 FBS and FCS&0.7015 & 8.780 & 2.363  \\
Erd\"os-R\'enyi, $n=246$  &2.892& 9.681 & 2.358 \\
E-optimal design, $n=246$ & {\bf  6.630} & {\bf 10.71}&{\bf  2.403}  \\
\hline
Div. 1 FBS &1.725 &  9.634 & 2.372  \\
Erd\"os-R\'enyi, $n=119$ & 3.497& 9.911 & 2.361 \\
E-optimal design, $n=119$ & {\bf 7.142} & {\bf 10.92 }&{\bf 2.402}
\end{tabular}
\end{center}
\caption{A comparison of the three objective functions defined in \eqref{eq:3ObjFun} for the Division 1 and Division 1 FBS schedules, Erd\"os-R\'enyi random schedules, and schedules which nearly-maximize $J_{E}(w) = \lambda_{2}$. Schedules which nearly-maximize $J_{E}(w) = \lambda_{2}$ also have larger values of $J_{A}$ and $J_{D}$ than the comparison schedules. 
See \S \ref{sec:football}
 \label{tab:EADvals}}
\end{table}


\subsection{Synthetic data experiment on the 2011-2012 NCAA Division 1 FBS graph} \label{sec:sythExp}
To further illustrate and test our proposed active learning method, we again consider the graph generated in \S \ref{sec:football} from the 2011-12 NCAA Division I Football Bowl Subdivision (FBS)  schedule with $n=119$ nodes and $m=693$ edges, as shown in Figure \ref{fig:cluster1A}.  We take as ground truth rating, $\phi$, a normally distributed vector with mean zero and variance, $\sigma^2= 1$. 
The ground truth rating, $\phi$, is used to generate new data according to the normal model \eqref{eq:errorModel} with $\sigma^2 = 5$. With this data, we solve \eqref{eq:LSE} to obtain a least squares estimate, $\hat \phi_{w_0}$. We compute $\| \hat \phi_{w_0} - \phi \|_2 = 17.31$ and $K(\phi_{w_0}, \phi) = 0.35$. Here,  
the \emph{Kendall-$\tau$ rank distance} between two rankings $\phi_1$ and $\phi_2$ is defined as the fraction  of pairwise disagreements between the rankings, 
\begin{equation} \label{eq:KTau}
K(\phi_1,\phi_2) := \frac{ \# \{(i,j) \colon i>j, \ \phi_1(i)< \phi_1(j),  \text{ and } \phi_2(i) > \phi_2(j) \}  }{n(n-1)/2}.
\end{equation}

\begin{figure}[t!]
\begin{center}
\includegraphics[width=.34\textwidth]{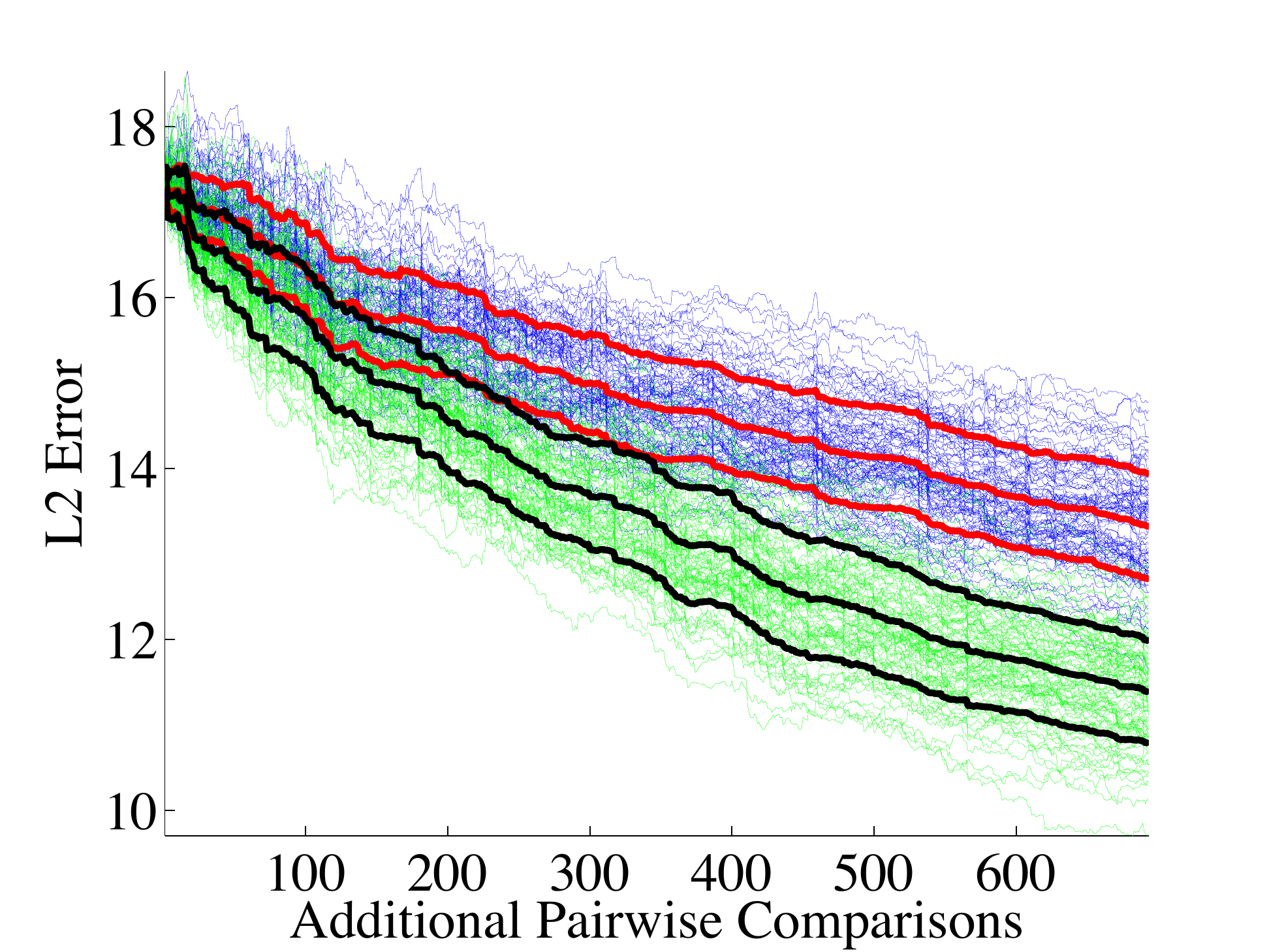} \hspace{-.5cm}
\includegraphics[width=.34\textwidth]{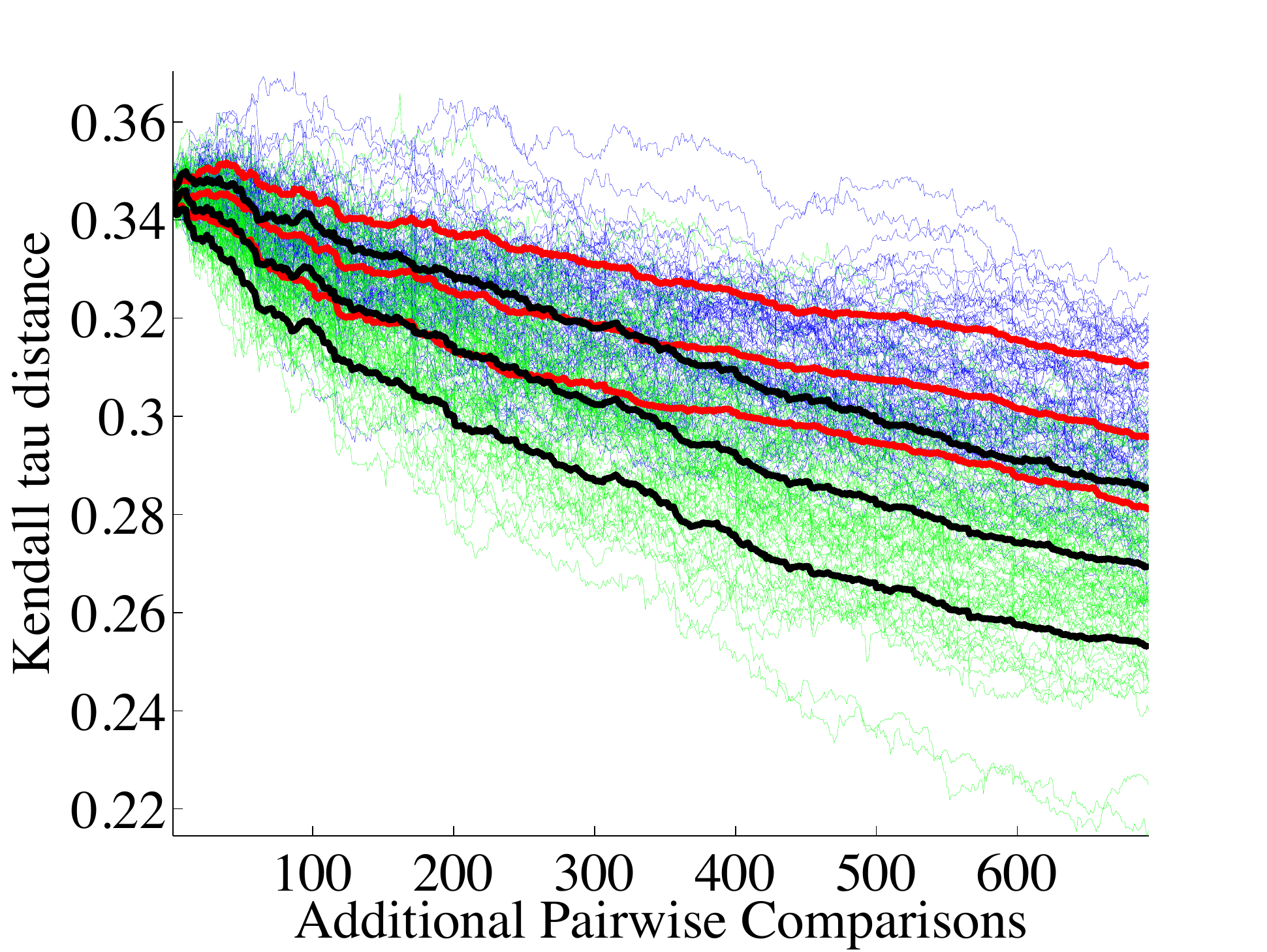} \hspace{-.5cm}
\includegraphics[width=.34\textwidth]{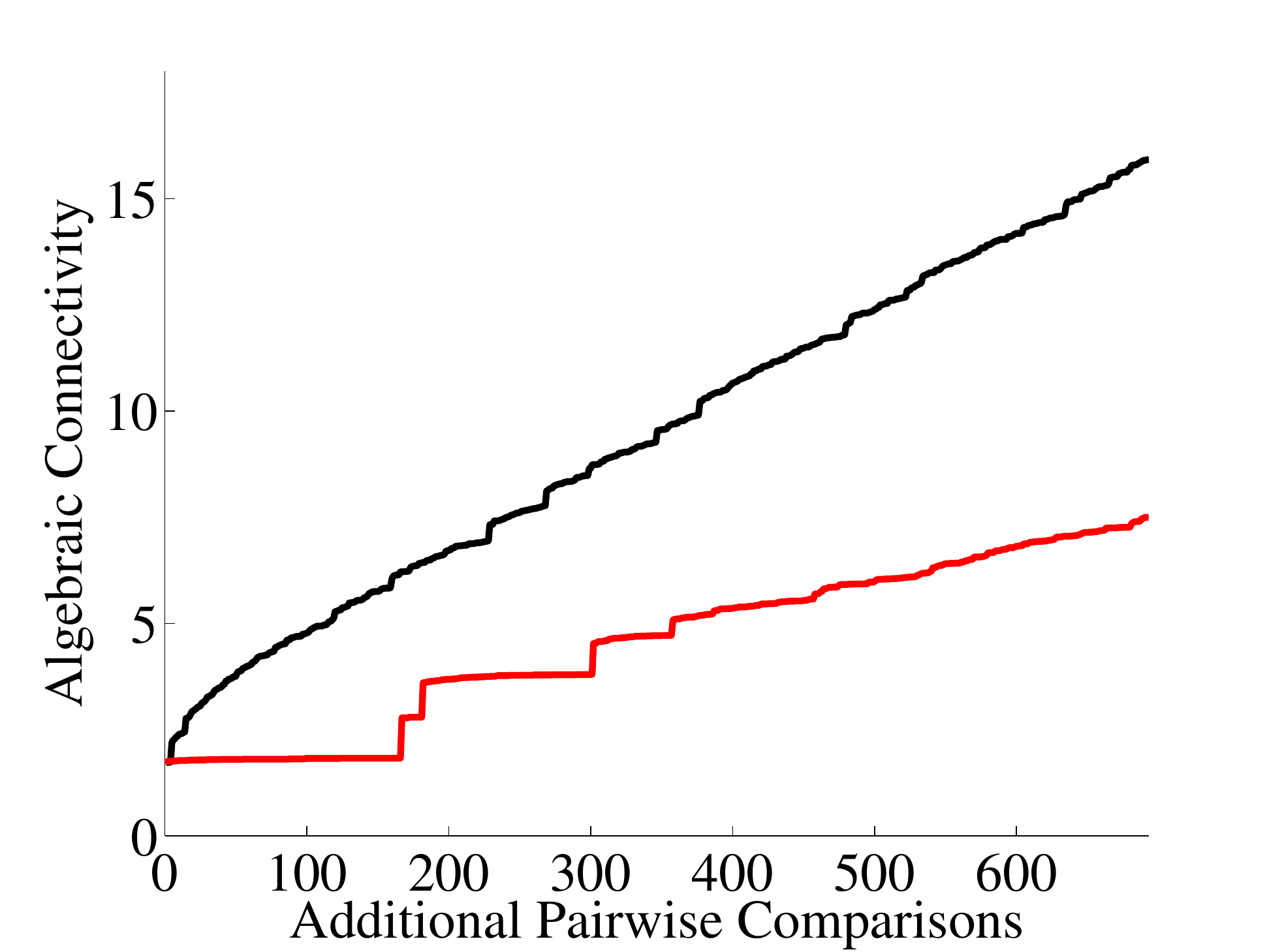}
\caption{A comparison of ranking errors and algebraic connectivity, a measure of the informativeness of  the ranking, for two data collection strategies: the proposed active sampling method (black and green) and random sampling (red and blue). 
{\bf (left)} The $L^2$-error, $\| \hat \phi_{\xi} - \phi \|_2$ between the estimated and ground truth rankings.  
{\bf (center)} The Kendall-$\tau$ rank distance, \eqref{eq:KTau}, between the estimated and ground truth rankings. 
{\bf (right)}  The algebraic connectivity of the graph representing the dataset. 
See \S \ref{sec:sythExp}.
\label{fig:synthData}}
\end{center}
\end{figure}

We then consider enhancing the dataset by adding $\xi$ more pairwise comparisons. Using the enhanced dataset, we compute an estimate of the ranking, $\hat \phi_\xi$, and, as $\hat \phi_\xi$ is an unbiased estimate of $\phi$, expect  $\| \hat \phi_{\xi} - \phi \|_2$ to diminish as $\xi \to \infty$. 
We choose  $\xi  = 693$, so that the number of pairwise comparisons (games played) is doubled. As in \S \ref{sec:movies}, we add pairwise comparisons either by the greedy algorithm (Algorithm \ref{alg:Greedy}) or by random selection. As with the data collected on the initial graph, the new data are collected according to the normal model \eqref{eq:errorModel} with $\sigma^2 = 5$. 
In  Fig. \ref{fig:synthData}(left), we plot the number of additional pairwise comparisons vs.  the $L^2$-error, $\| \hat \phi_{\xi} - \phi \|_2$, for an ensemble of ranking estimates determined using the two data collection strategies. The (thin) blue and green lines represent the error for 100 instances of data collection using the random and greedy methods respectively. The (thick) red and black lines represent the mean and mean plus/minus one standard deviation for each of the two data collection strategies. For $\xi = 693$, the mean $L^2$-error for the proposed data collection strategy is 11.38 while the mean error for the random data collection strategy is 13.32,  representing a reduction in error of $34\%$ and $23\%$ respectively. 
In  Fig. \ref{fig:synthData}(center), we plot the number of additional pairwise comparisons vs.  the Kendall-$\tau$ rank distance, $K(\phi,\phi_\xi)$, for these two data collection strategies.  For $\xi = 693$, the mean distance for the proposed data collection strategy is .27 while the mean error for the random data collection strategy is 0.30, representing a reduction in distance of $22\%$ and $14\%$ respectively.
In  Fig. \ref{fig:synthData}(right), we plot the algebraic connectivity, a measure of the informativeness of the ranking, vs. the number of additional pairwise comparisons. The black (red) line is the algebraic connectivity for the graph representing the dataset where edges are added using the greedy algorithm (random sampling). Supported by Propositions \ref{prop:FI} and \ref{prop:minLam2}, the dataset represented by a graph with larger algebraic connectivity is more informative and thus produces a ranking estimate with greater fidelity to the ground truth estimate.


\section{Discussion and future directions}\label{sec:disc}
We have applied methods from optimal experiment design to provide a new framework for  data collection for more informative statistical rankings. At the heart of this framework is a bi-level optimization problem \eqref{eq:bilevelOpt} where the inner problem is to determine the unbiased ranking for a given schedule and the outer problem is to identify data which maximizes the Fisher information of the ranking.  For the least-squares estimate,  the outer problem decouples from the inner problem and reduces to an eigenvalue optimization problem. For the E-optimality criterion for the Fisher information, this is the problem of finding an edge weight $w\in \mathbb Z_+^N$, such that the $w$-weighted graph Laplaican has large second eigenvalue \eqref{eq:minLam2}. This can be interpreted as finding a multigraph with large algebraic connectivity, a problem which has been well-studied in graph theory. 
In the case of NCAA Division 1 football, we demonstrated  in \S \ref{sec:football} and Table \ref{tab:EADvals} that the nearly-optimal data collection strategy in the sense of E-optimality is also a good startegy in the sense of D- and A-optimality; the choice of  scalar function $f\colon \mathbb S_{+}^{n} \to \mathbb R$ as defined in \eqref{eq:bilevelOpt}  does \emph{not} strongly effect the optimal data collection strategy (see Remark \ref{rem:OtherOptCond} for a further discussion of these  optimality criteria). Furthermore, in \S \ref{sec:sythExp}, we demonstrate using a synthetically constructed dataset on this graph that the ranking estimate obtained via active sampling has greater fidelity to ground truth than the ranking estimate obtained  via random sampling. 

There are several applications in, \eg,  social networking, game theory, and e-commerce, where improved data collection could potentially benefit ranking. In particular, for the Yahoo! Movie user ratings dataset (considered in \S \ref{sec:movies}), we have shown that the informativeness of ranking can be increased by a factor of $2.2$ if just $.01\%$ of additional optimally-targeted pairwise comparisons are added to the dataset. In contrast, if the same amount of random data is added, there is an unappreciable effect on the informativeness of the ranking.  For this application, the data collection problem could be more carefully modeled. Here, the pairwise comparison data is constructed from user rating data and thus any targeted pairwise comparison addition must be solicited from a user. Since the number of pairwise comparisons for which a particular reviewer adds when a new movie is reviewed is equal to the number of previous reviews that user has contributed, it may make sense to solicit additional reviews from users with many previous reviews.  That is,  the propagation of information from the user reviews to the pairwise comparison data in \eqref{eq:constructPairwise} should also be considered.

We have focused on optimal data collection for improved rankings, neglecting several important  factors including the cost of data collection and potential constraints on what data may be collected. There are two simple extensions to our method which may be employed to accommodate these additional factors.  
The cost of data collection could be incorporated by either adding a penalization term in \eqref{eq:minLam2} or by incorporating additional weights into the norm used to compute $\lambda_{2}$ in \eqref{eq:minLam2}. 
Data collection constraints may be handled by explicitly forbidding certain edge weights to be incremented in the greedy Algorithm \ref{alg:Greedy} for targeting data collection. 

The least-squares ranking estimate  \eqref{eq:R}   is referred to as HodgeRank by some authors \cite{Jiang2010,Xu2011}, 
 since the Hodge decomposition implies that the residual in \eqref{eq:R}, $r = B \phi - y$, can be further decomposed into two orthogonal components: (1) a divergence-free component which consists of 3-cycles and (2) a harmonic component which consists of longer cycles \cite{Jiang2010,Hirani:2011fk}. In fact, \citet{Jiang2010} argues that a dataset which has a large harmonic component is inherently inconsistent and does not have a reasonable ranking. The harmonic component lies in the kernel of the graph Helmholtzian with dimension given by the first Betti number of the associated simplical complex. Optimal reduction of the first Betti number  may provide an alternative approach to improving the informativeness of the least squares ranking.

Recently, \citet{Masuda2013} developed an algorithm for removing nodes from a graph to increase the algebraic connectivity. This algorithm could be used to prune the alternatives in a dataset to increase the informativeness of a ranking. 

Finally, we are interested in extending this work to  nonlinear ranking methods, including robust estimators  \citep{Osting:2012fk},  random walker methods  \citep{Callaghan:2007},  Perron-Frobenius eigenvalue methods \citep{Keener:1993zr,Langville:2012}, and Elo methods \citep{Elo1978,Glickman:1995,Langville:2012}. 

\acks{We thank Lawrence Carin, J\'er\^{o}me Darbon, Mark L.  Green, and Yuan Yao for useful discussions. B. Osting is supported by NSF DMS-1103959. C. Brune is supported by ONR grants N00014-10-10221 and N00014-12-10040. S. Osher is supported by ONR N00014-08-1-1119, N00014-10-10221, and NSF DMS-0914561.}

\vskip 0.2in
\bibliography{sd.bib}

\begin{thebibliography}{68}
\providecommand{\natexlab}[1]{#1}
\providecommand{\url}[1]{\texttt{#1}}
\expandafter\ifx\csname urlstyle\endcsname\relax
  \providecommand{\doi}[1]{doi: #1}\else
  \providecommand{\doi}{doi: \begingroup \urlstyle{rm}\Url}\fi

\bibitem[yah()]{yahooData}
{Yahoo!} {W}ebscope dataset:~{ydata-ymovies-user-movie-ratings-content-v1\_0}.
\newblock \url{http://webscope.sandbox.yahoo.com}.
\newblock accessed: 10/5/2011.

\bibitem[Ailon(2012)]{Ailon2012}
N.~Ailon.
\newblock An active learning algorithm for ranking from pairwise preferences
  with an almost optimal query complexity.
\newblock \emph{J. Mach. Learn. Res.}, 13:\penalty0 137--164, 2012.

\bibitem[Ailon et~al.(2011)Ailon, Begleiter, and Ezra]{ailon2011}
N.~Ailon, R.~Begleiter, and E.~Ezra.
\newblock Active learning using smooth relative regret approximations with
  applications.
\newblock \emph{arXiv preprint arXiv:1110.2136}, 2011.

\bibitem[Barrow et~al.(2013)Barrow, Drayer, Elliott, Gaut, and
  Osting]{Barrow2013}
D.~Barrow, I.~Drayer, P.~Elliott, G.~Gaut, and B.~Osting.
\newblock Ranking rankings: an empirical comparison of the predictive power of
  sports ranking methods.
\newblock \emph{J. Quantitative Analysis in Sports}, 9\penalty0 (2):\penalty0
  187--202, 2013.

\bibitem[Ben-Naim and Hengartner(2007)]{Ben-Naim:2007}
E.~Ben-Naim and N.~W. Hengartner.
\newblock Efficiency of competitions.
\newblock \emph{Phys. Rev. E}, 76:\penalty0 026106, 2007.

\bibitem[Biyikoglu et~al.(2007)Biyikoglu, Leydold, and Stadler]{Biyikoglu:2007}
T.~Biyikoglu, J.~Leydold, and P.~F. Stadler.
\newblock \emph{Laplacian Eigenvectors of Graphs}.
\newblock Springer, 2007.

\bibitem[Bj\"orner et~al.(1991)Bj\"orner, Lov\'asz, and Shor]{Bjorner:1991}
A.~Bj\"orner, L.~Lov\'asz, and P.~W. Shor.
\newblock Chip-firing games on graphs.
\newblock \emph{European J. Combin}, 12\penalty0 (4), 1991.

\bibitem[Boumal et~al.(2012)Boumal, Singer, Absil, and Blondel]{boumal2012}
N.~Boumal, A.~Singer, P.-A. Absil, and V.~D. Blondel.
\newblock {C}ram\'er-{R}ao bounds for synchronization of rotations.
\newblock \emph{arXiv preprint arXiv:1211.1621}, 2012.

\bibitem[Boyd et~al.(2004)Boyd, Diaconis, and Xiao]{boydDiaconisXiao2004}
S.~Boyd, P.~Diaconis, and L.~Xiao.
\newblock Fastest mixing {M}arkov chain on a graph.
\newblock \emph{SIAM {R}eview}, 46\penalty0 (4):\penalty0 667--689, 2004.

\bibitem[Callaghan et~al.(2007)Callaghan, Mucha, and Porter]{Callaghan:2007}
T.~Callaghan, P.~J. Mucha, and M.~A. Porter.
\newblock Random walker ranking for {NCAA} {D}ivision {I-A} football.
\newblock \emph{American Mathematical Monthly}, 114\penalty0 (9):\penalty0
  761--777, 2007.

\bibitem[Chung(1997)]{Chung:1997}
F.~R.~K. Chung.
\newblock \emph{Spectral Graph Theory}.
\newblock AMS, 1997.

\bibitem[Chung and Radcliffe(2011)]{Chung:2011}
F.~R.~K. Chung and M.~Radcliffe.
\newblock On the spectra of general random graphs.
\newblock \emph{Electronic Journal of Combinatorics}, 18\penalty0 (1):\penalty0
  215--229, 2011.

\bibitem[Chung et~al.(2003)Chung, Lu, and Vu]{Chung2003}
F.~R.~K. Chung, L.~Lu, and V.~H. Vu.
\newblock The spectra of random graphs with given expected degrees.
\newblock \emph{Internet Mathematics}, 1:\penalty0 257--275, 2003.

\bibitem[Chung and Haber(2012)]{ChungHaber12}
M.~Chung and E.~Haber.
\newblock Experimental design for biological systems.
\newblock \emph{{SIAM} J. Control Optim.}, 50\penalty0 (1):\penalty0 471--489,
  2012.

\bibitem[Coja-Oghlan(2007)]{coja2007}
A.~Coja-Oghlan.
\newblock On the {L}aplacian eigenvalues of {$G_{n,p}$}.
\newblock \emph{Combinatorics, Probability, \& Computing}, 16\penalty0
  (6):\penalty0 923--946, 2007.

\bibitem[David(1963)]{David:1963ys}
H.~A. David.
\newblock \emph{The Method of Paired Comparisons}.
\newblock Charles Griffin \& Co., 1963.

\bibitem[DiStefano~3rd(1976)]{DiStefano76}
J.~J. DiStefano~3rd.
\newblock Tracer experiment design for unique identification of nonlinear
  physiological systems.
\newblock \emph{Am. J. Physiol.}, 230\penalty0 (2):\penalty0 476--485, 1976.

\bibitem[D'Souza(2010)]{DSouza:2010fk}
C.~D'Souza.
\newblock Optimising a tournament for use with ranking algorithms.
\newblock preprint, 2010.

\bibitem[Eichhorn et~al.(2010)Eichhorn, Ni, and Eg]{eichhorn2010}
A.~Eichhorn, P.~Ni, and R.~Eg.
\newblock Randomised pair comparison: an economic and robust method for
  audiovisual quality assessment.
\newblock In \emph{Proceedings of the 20th international workshop on Network
  and operating systems support for digital audio and video}, pages 63--68.
  ACM, 2010.

\bibitem[Elo(1978)]{Elo1978}
A.~Elo.
\newblock \emph{The Rating of Chessplayers, Past and Present}.
\newblock Arco Pub., 1978.

\bibitem[Fallat et~al.(2003)Fallat, Kirkland, and Pati]{Fallat:2003}
S.~M. Fallat, S.~Kirkland, and S.~Pati.
\newblock On graphs with algebraic connectivity equal to minimum edge density.
\newblock \emph{Linear Algebra and its Applications}, 373:\penalty0 31--50,
  2003.

\bibitem[Fedorov(1972)]{Fedorov:1972}
V.~V. Fedorov.
\newblock \emph{Theory of Optimal Experiments}.
\newblock Academic Press, 1972.

\bibitem[Feige and Ofek(2005)]{Feige:2005}
U.~Feige and E.~Ofek.
\newblock Spectral techniques applied to sparse random graphs.
\newblock \emph{Random Structures \& Algorithms}, 27\penalty0 (2):\penalty0
  251--275, 2005.

\bibitem[Fiedler(1973)]{Fiedler:1973}
M.~Fiedler.
\newblock Algebraic connectivity of graphs.
\newblock \emph{Czech. Math. J.}, 23:\penalty0 298--305, 1973.

\bibitem[Ghosh and Boyd(2006{\natexlab{a}})]{Ghosh:2006}
A.~Ghosh and S.~Boyd.
\newblock Upper bounds on algebraic connectivity via convex optimization.
\newblock \emph{Linear Algebra and its Applications}, 418:\penalty0 693--707,
  2006{\natexlab{a}}.

\bibitem[Ghosh and Boyd(2006{\natexlab{b}})]{Ghosh:2006b}
A.~Ghosh and S.~Boyd.
\newblock Growing well-connected graphs.
\newblock \emph{Proc. {IEEE} Conf. Decision \& Control}, 2006{\natexlab{b}}.

\bibitem[Ghosh et~al.(2008)Ghosh, Boyd, and Saberi]{Ghosh:2008}
A.~Ghosh, S.~Boyd, and A.~Saberi.
\newblock Minimizing effective resistance of a graph.
\newblock \emph{SIAM Review}, 50\penalty0 (1):\penalty0 37--66, 2008.

\bibitem[Glickman(1995)]{Glickman:1995}
M.~E. Glickman.
\newblock A comprehensive guide to chess ratings.
\newblock \emph{American Chess Journal}, 3, 1995.

\bibitem[Glickman(2005)]{Glickman:2005}
M.~E. Glickman.
\newblock Adaptive paired comparison design.
\newblock \emph{Journal of Statistical Planning and Inference}, 127:\penalty0
  279--293, 2005.

\bibitem[Glickman(2008)]{Glickman2008}
M.~E. Glickman.
\newblock Bayesian locally optimal design of knockout tournaments.
\newblock \emph{Journal of Statistical Planning and Inference}, 138\penalty0
  (7):\penalty0 2117 -- 2127, 2008.

\bibitem[Grimmett(2010)]{Grimmett:2010}
G.~Grimmett.
\newblock \emph{Probability on Graphs: Random Processes on Graphs and
  Lattices}.
\newblock Cambridge University Press, 2010.

\bibitem[Haber et~al.(2008)Haber, Horesh, and Tenorio]{Haber:2008}
E.~Haber, L.~Horesh, and L.~Tenorio.
\newblock Numerical methods for experimental design of large-scale linear
  ill-posed inverse problems.
\newblock \emph{Inverse Problems}, 24:\penalty0 055012, 2008.

\bibitem[Hirani et~al.(2011)Hirani, Kalyanaraman, and Watts]{Hirani:2011fk}
A.~N. Hirani, K.~Kalyanaraman, and S.~Watts.
\newblock Least squares ranking on graphs.
\newblock {arXiv:1011.1716v4}, 2011.

\bibitem[Horesh et~al.(2011)Horesh, Haber, and Tenorio]{HHT2011}
L.~Horesh, E.~Haber, and L.~Tenorio.
\newblock Optimal experimental design for the large-scale nonlinear ill-posed
  problem of impedance imaging.
\newblock In L.~Biegler et~al, editor, \emph{in: Large-Scale Inverse Problems
  and Quantification of Uncertainty}. Wiley Ser. Comp. Stat., 2011.

\bibitem[Horn and Johnson(1990)]{Horn1990}
R.~A. Horn and C.~R. Johnson.
\newblock \emph{Matrix Analysis}.
\newblock Cambridge University Press, 1990.

\bibitem[Jamakovic and Mieghem(2008)]{jamakovic2008}
A.~Jamakovic and P.~Van Mieghem.
\newblock On the robustness of complex networks by using the algebraic
  connectivity.
\newblock In \emph{Networking 2008 Ad Hoc and Sensor Networks, Wireless
  Networks, Next Generation Internet}, pages 183--194. Springer, 2008.

\bibitem[Jamakovic and Uhlig(2007)]{Jamakovic:2007}
A.~Jamakovic and S.~Uhlig.
\newblock On the relationship between the algebraic connectivity and graph's
  robustness to node and link failures.
\newblock In \emph{Proceedings of the 3rd EURO-NGI Conference on Next
  Generation Internet Network}, 2007.

\bibitem[Jamieson and Nowak(2011)]{JamiesonNowakNIPS11}
K.~G. Jamieson and R.~D. Nowak.
\newblock Active ranking using pairwise comparisons.
\newblock In \emph{Neural Information Processing Systems (NIPS)}, pages
  2240--2248, 2011.

\bibitem[Jamieson and Nowak(2012)]{Jamieson2012}
K.~G. Jamieson and R.~D. Nowak.
\newblock Active ranking in practice: General ranking functions with sample
  complexity bounds.
\newblock 2012.

\bibitem[Jiang et~al.(2010)Jiang, Lim, Yao, and Ye]{Jiang2010}
X.~Jiang, L.-H. Lim, Y.~Yao, and Y.~Ye.
\newblock Statistical ranking and combinatorial {H}odge theory.
\newblock \emph{Math. Program. Ser. B}, 127\penalty0 (1):\penalty0 203--244,
  2010.

\bibitem[Juh\'asz(1991)]{juhasz1991}
F.~Juh\'asz.
\newblock The asymptotic behaviour of {F}iedler's algebraic connectivity for
  random graphs.
\newblock \emph{Discrete Mathematics}, 96\penalty0 (1):\penalty0 59--63, 1991.

\bibitem[Keener(1993)]{Keener:1993zr}
J.~P. Keener.
\newblock The {P}erron-{F}robenius theorem and the ranking of football teams.
\newblock \emph{SIAM Review}, 35\penalty0 (1):\penalty0 80--93, 1993.

\bibitem[Kolokolnikov(2013)]{Kolokolnikov2013}
T.~Kolokolnikov.
\newblock Maximizing algebraic connectivity for certain families of graphs.
\newblock \emph{submitted}, 2013.

\bibitem[Kolokolnikov et~al.(2013)Kolokolnikov, Osting, and
  Brecht]{Kolokolnikov2013b}
T.~Kolokolnikov, B.~Osting, and J.~Von Brecht.
\newblock Algebraic connectivity of {E}rd\"os-{R}\'enyi graphs near the
  connectivity threshold.
\newblock \emph{submitted}, 2013.

\bibitem[Krause et~al.(2008)Krause, Singh, and Guestrin]{krause2008}
A.~Krause, A.~Singh, and C.~Guestrin.
\newblock Near-optimal sensor placements in {G}aussian processes: Theory,
  efficient algorithms and empirical studies.
\newblock \emph{The Journal of Machine Learning Research}, 9:\penalty0
  235--284, 2008.

\bibitem[Langville and Meyer(2012)]{Langville:2012}
A.~N. Langville and C.~D. Meyer.
\newblock \emph{Who's \#1?: The Science of Rating and Ranking}.
\newblock Princeton University Press, 2012.

\bibitem[Masuda et~al.(2013)Masuda, Fujie, and Murota]{Masuda2013}
N.~Masuda, T.~Fujie, and K.~Murota.
\newblock Application of semidefinite programming to maximize the spectral gap
  produced by node removal.
\newblock In \emph{Complex Networks IV}, pages 155--163. Springer, 2013.

\bibitem[Melas(2006)]{Melas:2006}
V.~B. Melas.
\newblock \emph{Functional approach to optimal experimental design}.
\newblock Springer, 2006.

\bibitem[Mohar(1991)]{Mohar:1991}
B.~Mohar.
\newblock The {L}aplacian spectrum of graphs.
\newblock In \emph{Graph Theory, Combinatorics, and Applications}, volume~2,
  pages 871--898. Wiley, 1991.

\bibitem[Mosk-Aoyama(2008)]{Mosk-Aoyama:2008}
D.~Mosk-Aoyama.
\newblock Maximum algebraic connectivity augmentation is {NP}-hard.
\newblock \emph{Operations Research Letters}, 36\penalty0 (6):\penalty0
  677--679, 2008.

\bibitem[Olfati-Saber et~al.(2007)Olfati-Saber, Fax, and
  Murray]{Olfati-Saber:2007}
R.~Olfati-Saber, A.~Fax, and R.~M. Murray.
\newblock Consensus and cooperation in networked multi-agent systems.
\newblock \emph{Proceedings of the IEEE}, 95\penalty0 (1):\penalty0 215--233,
  2007.

\bibitem[Oliveira(2009)]{oliveira:2009}
R.~I. Oliveira.
\newblock Concentration of the adjacency matrix and of the {L}aplacian in
  random graphs with independent edges.
\newblock arXiv:0911.0600v2, 2009.

\bibitem[Osting et~al.(2013{\natexlab{a}})Osting, Brune, and Osher]{OBO2012}
B.~Osting, C.~Brune, and S.~Osher.
\newblock Enhanced statistical rankings via targeted data collection.
\newblock \emph{JMLR, W\&CP}, 28\penalty0 (1):\penalty0 489--497,
  2013{\natexlab{a}}.

\bibitem[Osting et~al.(2013{\natexlab{b}})Osting, Darbon, and
  Osher]{Osting:2012fk}
B.~Osting, J.~Darbon, and S.~Osher.
\newblock Statistical ranking using the $\ell^1$-norm on graphs.
\newblock \emph{AIMS J. Inverse Problems and Imaging}, 7\penalty0 (3):\penalty0
  907--926, 2013{\natexlab{b}}.

\bibitem[Pukelsheim(2006)]{Pukelsheim:2006}
F.~Pukelsheim.
\newblock \emph{Optimal Design of Experiments}.
\newblock SIAM, 2006.

\bibitem[Quinn and Keough(2002)]{quinnKeoughm02}
G.~P. Quinn and M.~J. Keough.
\newblock Experimental design and analysis for biologists.
\newblock \emph{Cambridge University Press, Cambridge, UK}, 2002.

\bibitem[Scarf and Yusof(2011)]{Scarf:2011fk}
P.~A. Scarf and M.~M. Yusof.
\newblock A numerical study of tournament structure and seeding policy for the
  soccer world cup finals.
\newblock \emph{Statistica Neerlandica}, 65\penalty0 (1):\penalty0 43--57,
  2011.

\bibitem[Seeger and Nickisch(2011)]{seeger2011}
M.~Seeger and H.~Nickisch.
\newblock Large scale {B}ayesian inference and experimental design for sparse
  linear models.
\newblock \emph{SIAM Journal of Imaging Sciences}, 4\penalty0 (1):\penalty0
  166--199, 2011.

\bibitem[Shi and Malik(2000)]{Shi:2000}
J.~Shi and J.~Malik.
\newblock Normalized cuts and image segmentation.
\newblock \emph{IEEE Transactions on Pattern Analysis and Machine
  Intelligence}, 22\penalty0 (8):\penalty0 888--905, 2000.

\bibitem[Silva and Carin(2012)]{SilvaCarin2012}
J.~Silva and L.~Carin.
\newblock Active learning for online {B}ayesian matrix factorization.
\newblock \emph{accepted paper at the 18th ACM SIGKDD conference on knowledge
  discovery and data mining, Beijing, China}, 2012.

\bibitem[Sun et~al.(2004)Sun, Boyd, Xiao, and Diaconis]{Sun2004}
J.~Sun, S.~Boyd, L.~Xiao, and P.~Diaconis.
\newblock The fastest mixing {M}arkov process on a graph and a connection to a
  maximum variance unfolding problem.
\newblock \emph{SIAM Review}, 48:\penalty0 2006, 2004.

\bibitem[Traud et~al.(2009)Traud, Frost, Mucha, and Porter]{Traud:2009}
A.~L. Traud, C.~Frost, P.~J. Mucha, and M.~A. Porter.
\newblock Visualization of communities in networks.
\newblock \emph{Chaos}, 19:\penalty0 041104, 2009.

\bibitem[Vu et~al.(2009)Vu, Altman, and Shoham]{Vu:2001uq}
T.~Vu, A.~Altman, and Y.~Shoham.
\newblock On the complexity of schedule control problems for knockout
  tournaments.
\newblock In \emph{Proc. of 8th Int. Conf. on Autonomous Agents and Multiagent
  Systems (AA-MAS 2009) in Budapest, Hungary}, 2009.

\bibitem[Wang and Mieghem(2008)]{Wang:2008}
H.~Wang and P.~Van Mieghem.
\newblock Algebraic connectivity optimization via link addition.
\newblock In \emph{Bionetics 2008, Hyogo, Japan}, 2008.

\bibitem[Wauthier et~al.(2013)Wauthier, Jordan, and Jojic]{wauthier2013}
F.~L. Wauthier, M.~I. Jordan, and N.~Jojic.
\newblock Efficient ranking from pairwise comparisons.
\newblock In \emph{Proceedings of the 30th International Conference on Machine
  Learning (ICML)}, 2013.

\bibitem[Wu(2009)]{wgPlot}
M.~Wu.
\newblock {wgPlot}.
\newblock \url{http://www.mathworks.com/matlabcentral/fileexchange/24035},
  2009.

\bibitem[Xu et~al.(2011)Xu, Yao, Jiang, Huang, Yan, and Lin]{Xu2011}
Q.~Xu, Y.~Yao, T.~Jiang, Q.~Huang, B.~Yan, and W.~Lin.
\newblock Random partial paired comparison for subjective video quality
  assessment via {H}odge{R}ank.
\newblock In \emph{{ACM} Multimedia}, 2011.

\bibitem[Xu et~al.(2012)Xu, Huang, and Yao]{xu2012}
Q.~Xu, Q.~Huang, and Y.~Yao.
\newblock Online crowdsourcing subjective image quality assessment.
\newblock In \emph{Proceedings of the 20th ACM international conference on
  Multimedia}, pages 359--368. ACM, 2012.

\end{thebibliography}
\end{document}